\newtheorem{theorem}{Theorem}
\newtheorem{lemma}{Lemma}
\newtheorem{remark}{Remark}
\newcommand{\field}[1]{\mathbb{#1}}
\newcommand{\E}{\field{E}}
\newcommand{\argmin}{\mathop{\rm argmin}}
\newcommand{\sgn}{\mathrm{sgn}}
\newcommand{\scF}{\mathcal{F}}
\newcommand{\scB}{\mathcal{B}}
\newcommand{\CC}{\mathcal{C}}
\newcommand{\DD}{\mathcal{D}}
\newcommand{\dH}{{d_H}}
\newcommand{\dE}{{d_E}}
\newcommand{\scO}{\mathcal{O}}
\newcommand{\sctO}{\widetilde{\mathcal{O}}}
\newcommand{\scA}{\mathcal{A}}
\newcommand{\scT}{\mathcal{T}}
\newcommand{\kt}{\widetilde{K}(T,c^*)}
\newcommand{\AB}{\mathcal{AB}}
\newcommand{\LB}{\mathcal{LB}}
\newcommand{\hc}{\widehat{c}}
\newcommand{\err}{\mathrm{err}}
\newcommand{\hCC}{\widehat{\CC}}
\newcommand{\scL}{\mathcal{L}}
\newcommand{\CCC}{\field{C}}
\newcommand{\DDD}{\field{D}}
\renewcommand{\Pr}{\field{P}}
\newcommand{\scP}{\mathcal{P}}
\newcommand{\scS}{\mathcal{S}}
\newcommand{\defeq}{\overset{\mathrm{def}}{=\joinrel=}}
\newcommand{\ots}{{\sc{ots}}}
\newcommand{\wdp}{{\sc{wdp}}}
\newcommand{\nwdp}{{\sc{n-wdp}}}
\newcommand{\pa}{\mathrm{par}}
\newcommand{\lca}{\mathrm{lca}}
\newcommand{\de}{\mathrm{de}}
\newcommand{\lef}{\mathrm{left}}
\newcommand{\ri}{\mathrm{right}}
\newcommand\bnm[2]{\mleft(\knds\genfrac..{0pt}{}{#1}{#2}\knds\mright)}
\newcommand\comm[1]{\textcolor{ForestGreen}{\texttt{/*~#1~*/}}}
\newcommand{\knds}{\kern-\nulldelimiterspace}
\definecolor{verylightgray}{rgb}{0.8,0.8,0.8}
\newcommand{\hn}{\widetilde{H}}
\newcommand{\scD}{\mathcal{D}}
\newcommand{\kti}{\widetilde{K}}
\newenvironment{proof}
{\textit{Proof.}}
{$\square$}
\title{Flattening a Hierarchical Clustering through Active Learning}
\author{%
	   Fabio Vitale\\
	   Department of Computer Science\\
       INRIA Lille (France) \& \\ Sapienza University of Rome (Italy)\\
 	   \texttt{fabio.vitale@inria.fr}  \\
       \And
              Anand Rajagopalan\\
       Google (New York, USA)\\
       \texttt{arajagopalan@google.com}\\
       \And
       \And
		Claudio Gentile\\
       	Google (New York, USA)\\
       \texttt{cgentile@google.com}\\
}
\begin{document}

\maketitle

\begin{abstract}
\vspace{-0.1in}
We investigate active learning by pairwise similarity over the leaves of trees originating from hierarchical clustering procedures. In the realizable setting, we provide a full characterization of the number of queries needed to achieve perfect reconstruction of the tree cut. In the non-realizable setting, we rely on known important-sampling procedures to obtain regret and query complexity bounds. Our algorithms come with theoretical guarantees on the statistical error and, more importantly, lend themselves to {\em linear-time} implementations in the relevant parameters of the problem. We discuss such implementations, prove running time guarantees for them, and present preliminary experiments on real-world datasets showing the compelling practical performance of our algorithms as compared to both passive learning and simple active learning baselines.
\vspace{-0.1in}
\end{abstract}




\vspace{-0.1in}
\section{Introduction}\label{s:intro}
\vspace{-0.12in}
Active learning is a learning scenario where 
labeled data are scarse and/or 
expensive to gather, as they require careful assessment by human labelers. This is often the case in several practical settings where machine learning is routinely deployed, from image annotation to document classification, from speech recognition to spam detection, and beyond. In all such cases, an active learning algorithm tries to limit human intervention by seeking as little supervision as possible, still obtaining accurate prediction on unseen samples. This is an attractive learning framework offering substantial practical benefits, but also presenting statistical and algorithmic challenges.

\vspace{-0.05in}
A main argument that makes active learning effective is when combined with methods that exploit the {\em cluster structure} of data (e.g., \cite{hd08,kub15,c+19}, and references therein), where a cluster typically encodes some notion of semantic similarity across the involved data points. An obiquitous solution to clustering is to organize data into a {\em hierarchy}, delivering clustering solutions at different levels of resolution. An (agglomerative) Hierarchical Clustering (HC) procedure is an unsupervised learning method parametrized by a similarity function over the items to be clustered and a linkage function that lifts similarity from items to clusters of items. Finding the ``right'' level of resolution amounts to turning a given HC into a {\em flat} clustering by cutting the resulting tree appropriately. 
We would like to do so by resorting to human feedback in the form of 
{\em pairwise similarity} queries, that is, yes/no questions of the form ``are these two products similar to one another ?'' or ``are these two news items covering similar events ?''. It is well known that such queries are relatively easy to respond to, but are also intrinsically prone to subjectiveness and/or noise. More importantly, the hierarchy at hand need not be aligned with the similarity feedback we actually receive.

\vspace{-0.05in}
In this paper, we investigate the problem of cutting a tree originating from a pre-specified HC procedure through pairwise similarity queries generated by active learning algorithms. Since the tree is typically not consistent with the similarity feedback, that is to say, the feedback is {\em noisy},
we are lead to tackle this problem under a variety of assumptions about the nature of this noise (from noiseless to random but persistent to general agnostic). Moreover, because different linkage functions applied to the very same set of items may give rise to widely different tree topologies, our study also focuses on characterizing active learning performance as a function of the structure of the tree at hand. Finally, because these hierarchies may in practice be sizeable (in the order of billion nodes), scalability will be a major concern in our investigation.

\vspace{-0.05in}
\noindent{\bf Our contribution.} 
In the realizable setting (both noiseless and persistent noisy, Section \ref{s:realizable}), we introduce algorithms 
whose expected number of queries scale with the {\em average complexity} of tree cuts, a notion which is introduced in this paper. A distinctive feature of these algorithms is that they are rather ad hoc in the way they deal with the structure of our problem. In particular, they cannot be seen as finding the query that splits the version space as evenly as possible, a common approach in many active learning papers (e.g., \cite{da05,no11,gk17,gss13,td17,ml18}, and references therein). We then show that, at least in the noiseless case, this average complexity measure
characterizes the expected query complexity of the problem. Our ad hoc analyses are beneficial in that they deliver sharper 
guarantees than those 
readily available from the
above papers. In addition, and perhaps more importantly for practical usage, our algorithms admit {\em linear-time} implementations in the relevant parameters of the problem (like the number of items to be clustered).
In the non-realizable setting (Section \ref{s:nonrealizable}), we build on known results in importance-weighted active learning (e.g., \cite{bdl09,b+10}) to devise a selective sampling algorithm working under more general conditions. While our statistical analysis follows by adaptating available results,
our goal here is to rather come up with fast implementations, so as to put the resulting algorithms on the same computational footing as those operating under (noisy) realizability assumptions. By leveraging the specific structure of our hypothesis space, we design a fast incremental algorithm for selective sampling whose running time per round is linear in the height of the tree. In turn, this effort paves the way for our experimental investigation (Section \ref{s:exp}), where we compare the effectiveness of the two above-mentioned approaches (realizable with persistent noise vs non-realizable) on real data originating from various linkage functions.
Though quite preliminary in nature, these experiments seem to suggest that the algorithms originating from the persistent noise assumption exhibit more attractive learning curves than those working in the more general non-realizable setting.

\vspace{-0.05in}
\noindent{\bf Related work.}
The literature on active learning is vast, and we can hardly do it justice here. In what follows we confine ourselves to the references which we believe are closest to our paper.
Since our sample space is discrete (the set of all possible pairs of items from a finite set of size $n$), our realizable setting is essentially a {\em pool-based} active learning setting. Several papers have considered greedy algorithms which generalize binary search \cite{arkin+93,kosaraju+99,da05,no11,gk17,ml18}. The query complexity can be measured either in the worst case or averaged over a prior distribution over all possible labeling functions in a given set. The query complexity of these algorithms can be analyzed by comparing it to the best possible query complexity achieved for that set of items. In \cite{da05} it is shown that if the probability mass of the version space is split as evenly as possible then the approximation factor for its average query complexity is $\scO(\log(1/p_m))$, where $p_m$ is the minimal prior probability of any considered labeling function. \cite{gk17} extended this result through a more general approach to approximate greedy rules, but with the worse factor $\scO(\log^2(1/p_m))$. \cite{kosaraju+99} observed that modifying the prior distribution always allows one to replace $\scO(\log(1/p_m))$ by the smaller factor $\scO(\log N)$, where $N$ is the size of the set of labeling functions. Results of a similar flavor are contained in \cite{no11,ml18}. In our case, $N$ can be exponential in $n$ (see Section \ref{s:prel}), making these landmark results too broad to be tight for our specific setting. Furthermore, some of these papers (e.g., \cite{da05,no11,ml18}) have only theoretical interest because of their difficult algorithmic implementation. Interesting advances on this front are contained in the more recent paper \cite{td17}, though when adapted to our specific setting, their results give rise to worse query bounds than ours. In the same vein are the papers by \cite{chkk15,chk17}, dealing with persistent noise. 
Finally, in the non-realizable setting, our work fully relies on \cite{b+10}, which in turns builds on standard references like \cite{cal94,bdl09,ha07} -- see, e.g., the comprehensive survey by~\cite{ha14}.
Further references specifically related to clustering with queries are mentioned in Appendix \ref{as:comp}.

\vspace{-0.12in}
\section{Preliminaries and learning models}\label{s:prel}
\vspace{-0.12in}
We consider the problem of finding cuts of a given binary tree through pairwise similarity queries over its leaves. We are given in input a binary\footnote
{
In fact, the trees we can handle are more general than binary: we are making the binary assumption throughout for presentational convenience only.
}
tree $T$ originating from, say, an agglomerative (i.e., bottom-up) HC procedure (single linkage, complete linkage, etc.) applied to a set of items $L =\{x_1,\ldots,x_n\}$. Since $T$ is the result of successive (binary) merging operations from bottom to top, $T$ turns out to be a {\em strongly binary tree}\footnote
{
A strongly binary tree is a rooted binary tree for which the root is adjacent to either zero or two nodes, and all non-root nodes are adjacent to either one or three nodes.
}
and the items in $L$ are the leaves of $T$. We will denote by $V$ the set of nodes in $T$, including its leaves $L$, and by $r$ the root of $T$. The height of $T$ will be denoted by $h$. When referring to a subtree $T'$ of $T$, we will use the notation $V(T')$, $L(T')$, $r(T')$, and $h(T')$, respectively. 
We also denote by $T(i)$ the subtree of $T$ rooted at node $i$, and by $L(i)$ the set of leaves of $T(i)$, so that $L(i) = L(T(i))$, and $r(T(i)) = i$. Moreover, $\pa(i)$ will denote the parent of node $i$ (in tree $T$), $\lef(i)$ will be the left-child of $i$, and $\ri(i)$ its right child.

\begin{figure}
\vspace{-0.0in}\hspace{-0.1in}
\begin{minipage}{0.65\textwidth}
\vspace{-0.08in}
		\begin{subfigure}{.51\textwidth}
			\includegraphics[width=8.8cm]{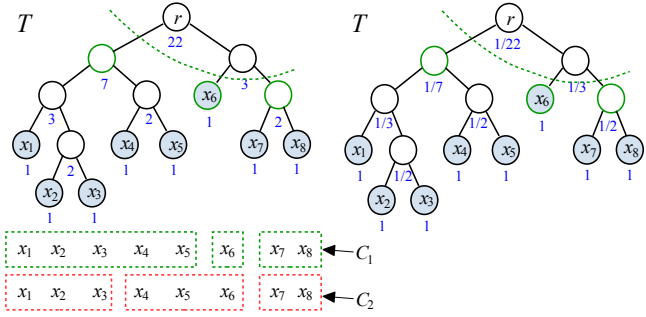}
		\end{subfigure}%
	\end{minipage}
\begin{minipage}[l]{0.36\textwidth}
\vspace{-2.6in}
	\caption{{\bf Left:} A binary tree corresponding to a hierarchical clustering of the set of items $L = \{x_1,\ldots,x_8\}$. The cut depicted in dashed green has two nodes above and the rest below. This cut induces over $L$ the flat clustering $C_1 = \{\{x_1,x_2,x_3,x_4,x_5\},\{x_6\},\{x_7,x_8\}\}$ corresponding to the leaves of the subtrees rooted at the 3 green-bordered nodes just below the cut (the lower boundary of the cut). Clustering $C_1$ is therefore realized by $T$. On the contrary,\label{fig:tree}}
\end{minipage}%
\vspace{-2.71in}
{\small clustering $C_2 =\{\{x_1,x_2,x_3\},\{x_4,x_5,x_6\},\{x_7,x_8\}\}$ is not. Close to each node $i$ is also displayed the number $N(i)$ of realized cuts by the subtree rooted at $i$. For instance, in this figure, $7 = 1+3\cdot 2$, and $22 = 1+7\cdot 3$, so that $T$ admits overall $N(T) =22$ cuts. {\bf Right:} The same figure, where below each node $i$ are the probabilities $\Pr(i)$ encoding a uniform prior distribution over cuts. Notice that $\Pr(i) = 1/N(i)$ so that, like all other cuts, the depicted green cut has probability $(1-1/22)\cdot(1-1/3)\cdot(1/7)\cdot 1\cdot (1/2) = 1/22$}.
\vspace{-0.15in}
\end{figure}

\vspace{-0.05in}
A {\em flat} clustering $\CC$ of $L$ is a partition of $L$ into disjoint (and non-empty) subsets.
A {\em cut} $c$ of $T$ of size $K$ is a set of $K$ edges of $T$ that partitions $V$ into two disjoint subsets;
we call them
the nodes {\em above} $c$ and the nodes {\em below} $c$. Cut $c$ also univocally induces a clustering over $L$, made up of the clusters $L(i_1),L(i_2),\ldots, L(i_K)$, where $i_1,i_2, \ldots, i_K$ are the nodes below $c$ that the edges of $c$ are incident to. We denote this clustering by $\CC(c)$, and call the nodes $i_1,i_2, \ldots, i_K$ the {\em lower boundary} of $c$.
We say that clustering $\CC_0$ is {\em realized} by $T$ if there exists a cut $c$ of $T$ such that $\CC(c) = \CC_0$. See Figure \ref{fig:tree} (left) for a pictorial illustration. Clearly enough, for a given $L$, and a given tree $T$ with set of leaves $L$, not all possible clusterings over $L$ are realized by $T$, as the number and shape of the clusterings realized by $T$ are strongly influenced by $T$'s structure. Let $N(T)$ be the number of clusterings realized by $T$ (notice that this is also equal to the number of distinct cuts admitted by $T$). Then $N(T)$ can be computed through a simple recursive formula. If we let
$N(i)$ be the number of cuts realized by $T(i)$, one can easily verify that $N(i) = 1 + N(\lef(i))\cdot N(\ri(i))$, with $N(x_i) = 1$ for all $x_i \in L$. With this notation, we then have $N(T) = N(r(T))$. If $T$ has $n$ leaves, $N(T)$ ranges from $n$, when $T$ is a degenerate line tree, to the exponential $\lfloor \alpha^n\rfloor$, when $T$ is the full binary tree, where
$\alpha \simeq 1.502$ (e.g., \url{http://oeis.org/A003095}).
See again Figure \ref{fig:tree} (left) for a simple example. 

\vspace{-0.05in}
A {\em ground-truth} matrix $\Sigma$ is an $n\times n$ and $\pm 1$-valued symmetric matrix $\Sigma =[\sigma(x_i,x_j)]_{i,j=1}^{n\times n}$ encoding a pairwise similarity relation over $L$. Specifically, if $\sigma(x_i,x_j)=1$ we say that $x_i$ and $x_j$ are similar, while if $\sigma(x_i,x_j)=-1$ we say they are dissimilar. Moreover, we always have $\sigma(x_i,x_i)=1$ for all $x_i \in L$. Notice that $\Sigma$ need not be consistent with a given clustering over $L$, i.e., 
the binary relation defined by $\Sigma$ over $L$
need not be transitive.

\vspace{-0.05in}
Given $T$ and its leaves $L$, an active learning algorithm $A$ proceeds in a sequence of rounds. In a purely active setting, at round $t$, the algorithm queries a pair of items $(x_{i_t},x_{j_t})$, and observes the associated label $\sigma(x_{i_t},x_{j_t})$. In a {\em selective sampling} setting, at round $t$, the algorithm is presented with $(x_{i_t},x_{j_t})$ drawn from some distribution over $L\times L$, and has to decide whether or not to query the associated label $\sigma(x_{i_t},x_{j_t})$. In both cases, the algorithm is stopped at some point, and is compelled to commit to a specific cut of $T$ (inducing a flat clustering over $L$). Coarsely speaking, the goal of $A$ is to come up with a good cut of $T$, by making as few queries as possible on the entries of $\Sigma$. 

\vspace{-0.05in}
\noindent{\bf Noise Models.} 
The simplest possible setting, called {\em noiseless realizable} setting, is when $\Sigma$ itself is consistent with a given clustering realized by $T$, i.e., when there exists a cut $c^*$ of $T$ such that $\CC(c^*) = \{L(i_1), \ldots, L(i_K)\}$, for some nodes $i_1, \ldots, i_K \in V$, that satisfies the following: For all $r = 1, \ldots, K$, and for all pairs $(x_i,x_j) \in L(i_r) \times L(i_r)$ we have $\sigma(x_i,x_j) = 1$, while for all other pairs we have $\sigma(x_i,x_j) = -1$. We call (persistent) {\em noisy realizable} setting one where 
$\Sigma$ is generated as follows. Start off from the noiseless ground-truth matrix,
and call it $\Sigma^*$. Then, in order to obtain $\Sigma$ from $\Sigma^*$, consider the set of all $\binom{n}{2}$ pairs $(x_i,x_j)$ with $i < j$, and pick uniformly at random a subset of size $\lfloor\lambda\,\binom{n}{2}\rfloor$, for some $\lambda \in [0,1/2)$. Each such pair 
has flipped label in $\Sigma$: $\sigma(x_i,x_j) = 1-\sigma^*(x_i,x_j)$. This is then combined with the symmetric $\sigma(x_i,x_j) = \sigma(x_j,x_i)$, and the reflexive $\sigma(x_i,x_i) = 1$ conditions. We call $\lambda$ the noise level. Notice that this kind of noise is random but {\em persistent}, in that if we query the same pair $(x_i,x_j)$ twice we do obtain the same answer $\sigma(x_i,x_j)$. Clearly, the special case $\lambda = 0$ corresponds to the noiseless setting. Finally, in the general {\em non-realizable} (or agnostic) setting, $\Sigma$ is an arbitrary matrix that need not be consistent with any clustering over $L$, in particular, with any clustering over $L$ realized by $T$.

\vspace{-0.05in}
\noindent{\bf Error Measure.} If $\Sigma$ is some ground-truth matrix over $L$, and $\hc$ is the cut output by $A$, with induced clustering $\hCC = \CC(\hc)$,
%
we let $\Sigma_{\hCC} =[\sigma_{\hCC}(x_i,x_j)]_{i,j=1}^{n\times n}$ be the similarity matrix associated with $\hCC$, i.e., $\sigma_{\hCC}(x_i,x_j) = 1$ if $x_i$ and $x_j$ belong to the same cluster, and $-1$ otherwise. Then the {\em Hamming} distance $\dH(\Sigma,\hCC)$ 
simply counts the number of pairs $(x_i,x_j)$ having inconsistent sign:
\(
\dH(\Sigma,\hCC) = \left|\{(x_i,x_j) \in L^2\,:\, \sigma(x_i,x_j)\neq\sigma_{\hCC}(x_i,x_j) \} \right|\,.
\)
The same definition applies in particular to the case when $\Sigma$ itself represents a clustering over $L$. The quantity $\dH$, sometimes called correlation clustering distance, is 
closely related to the
Rand index \cite{ra71} -- see, e.g., \cite{me11}.

\vspace{-0.05in}
\noindent{\bf Prior distribution.} Recall cut $c^*$ defined in the noiseless realizable setting and its associated 
$\Sigma^*$. Depending on the specific learning model we consider (see below), the algorithm may have access to a {\em prior} distribution $\Pr(\cdot)$ over $c^*$, parametrized as follows. For $i \in V$, let $\Pr(i)$ be the conditional probability that $i$ is below $c^*$ given that all $i$'s ancestors are above.
If we denote by $\AB(c^*) \subseteq V $ the nodes of $T$ which are above $c^*$, and by $\LB(c^*) \subseteq V$ those on the lower boundary of $c^*$, we can write
\vspace{-0.05in}
\begin{equation}\label{e:prob}
\Pr(c^*)=\Bigl(\prod_{i\in \AB(c^*)} (1-\Pr(i))\Bigl)\cdot\Bigl(\prod_{j\in \LB(c^*)} \Pr(j)\Bigl)~,
\end{equation}
where $\Pr(i) = 1$ if $i \in L$. In particular, setting $\Pr(i) = 1/N(i)$ $\forall i$ yields the {\em uniform} prior $\Pr(c^*) = 1/N(T)$ for all $c^*$ realized by $T$. See Figure \ref{fig:tree} (right) for an illustration. A canonical example of a non-uniform prior is one that favors cuts close to the root, which are thereby inducing clusterings having few clusters. These can be obtained, e.g., by setting
$\Pr(i) = \alpha$, for some constant $\alpha \in (0,1)$.

\vspace{-0.05in}
\noindent{\bf Learning models.} We consider two learning settings.
%
The first setting (Section \ref{s:realizable}) is an active learning setting under a noisy realizability assumption with prior information. Let $\CC^* = \CC(c^*)$ be the ground truth clustering induced by cut $c^*$ before noise is added. Here, for a given prior $\Pr(c^*)$, the goal of learning is to identify $\CC^*$ either exactly (when $\lambda = 0$) or approximately (when $\lambda > 0$), while bounding the expected number of queries $(x_{i_t},x_{j_t})$ made to the ground-truth matrix $\Sigma$, the expectation being over the noise, and possibly over $\Pr(c^*)$. In particular, if $\hCC$ is the clustering produced by the algorithm after it stops, we would like to prove upper bounds on $\E[\dH(\Sigma^*,\hCC)]$,
as related to the number of active learning rounds, as well as to the properties of the prior distribution.
The second setting (Section \ref{s:nonrealizable}) is a selective sampling setting where the pairs $(x_{i_t},x_{j_t})$ are drawn i.i.d. according to an arbitrary and unknown distribution $\DD$ over the $n^2$ entries of $\Sigma$, and the algorithm at every round can choose whether or not to query the label. After a given number of rounds the algorithm is stopped, and the goal is the typical goal of agnostic learning: no prior distribution over cuts is available anymore, and we would like to bound with high probability over the sample $(x_{i_1},x_{j_1}), (x_{i_2},x_{j_2}), \ldots $ the so-called {\em excess risk} of the clustering $\hCC$ produced by $A$, i.e., the difference
\begin{equation}\label{e:excessrisk}
\Pr_{(x_i,x_j) \sim \DD}\left( \sigma(x_i,x_j) \neq \sigma_{\hCC}(x_i,x_j) \right) - \min_{c} \Pr_{(x_i,x_j) \sim \DD}\left( \sigma(x_i,x_j) \neq \sigma_{\CC(c)}(x_i,x_j) \right)~,
\end{equation}
the minimum being over all possible cuts $c$ realized by $T$. Notice that when $\DD$ is uniform 
the excess risk reduces to 
\(
\frac{1}{n^2}\,\left(\dH(\Sigma,\hCC) - \min_{c} \dH(\Sigma,\CC(c))\right).
\)
At the same time, we would like to bound with high probability
the total number of labels the algorithm has queried.


\vspace{-0.1in}
\section{Active learning in the realizable case}\label{s:realizable}
\vspace{-0.1in}
%

As a warm up, we start by considering the case where $\lambda = 0$ (no noise).
The underlying cut $c^*$ can be conveniently described by assigning to each node $i$ of $T$ a binary value $y(i) = 0$ if $i$ is above $c^*$, and $y(i) = 1$ if $i$ is below.
Then we can think of an active learning algorithm as querying {\em nodes}, instead of querying pairs of leaves. A query to node $i \in V$ can be implemented by querying {\em any} pair $(x_{i_{\ell}},x_{i_r}) \in L(\lef(i)) \times L(\ri(i))$. 
When doing so, we actually receive $y(i)$, since for any such $(x_{i_{\ell}},x_{i_r})$, we clearly have $y(i) = \sigma^*(x_{i_{\ell}},x_{i_r})$. 
An obvious baseline is then  to perform a kind of breadth-first search in the tree: We start by querying the root $r$, and observe $y(r)$; if $y(r) = 1$ we stop and output clustering $\hCC = \{L\}$; otherwise, we go down by querying both $\lef(r)$ and $\ri(r)$, and then proceed recursively. It is not hard to show that this simple algorithm will make at most $2K-1$ queries, with an overall running time of $O(K)$, where $K$ is the number of clusters of $\CC(c^*)$. See Figure \ref{f:2} for an illustration. If we know beforehand that $K$ is very small, then this baseline is a tough competitor. Yet, this is not the best we can do in general. Consider, for instance, the line graph in Figure \ref{f:2} (right), where $c^*$ has $K = n$.

\begin{figure}
\vspace{-0.0in}\hspace{-0.1in}
\begin{minipage}{0.62\textwidth}
\vspace{-0.08in}
		\begin{subfigure}{.51\textwidth}
			\includegraphics[width=8.8cm]{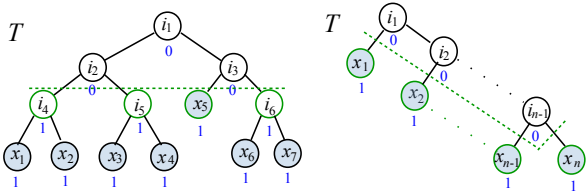}
		\end{subfigure}%
	\end{minipage}
\begin{minipage}[l]{0.39\textwidth}
\vspace{-3.2in}
	\caption{{\bf Left:} The dotted green cut $c^*$ can be described by the set of values of $\{y(i), i \in V\}$, below each node. In this tree, in order to query, say, node $i_2$, it suffices to query any of the four pairs $(x_1,x_3)$, $(x_1,x_4)$, $(x_2,x_3)$, or $(x_2,x_4)$. The 
baseline
queries $i_1$ through $i_6$ in a breadth-first manner, and then stops having identified $c^*$.  \label{f:2}} 
\end{minipage}%
\vspace{-3.38in}
{\small {\bf Right:} This graph has $N(T) = n$. On the depicted cut, the baseline has to query all $n-1$ internal nodes.}
\vspace{-0.15in}
\end{figure}

\vspace{-0.05in}
Ideally, for a given prior 
$\Pr(\cdot)$, we would like to obtain a query complexity
of the form $\log (1/\Pr(c^*))$, holding in the worst-case for all underlying $c^*$. As we shall see momentarily, this is easily obtained when $\Pr(\cdot)$ is uniform.
We first describe a version space algorithm (One Third Splitting, \ots) 
that admits a fast implementation, and whose number of queries in the worst-case is $\scO(\log N(T))$. This will in turn pave the way for our second algorithm, Weighted Dichotomic Path (\wdp). \wdp\, leverages $\Pr(\cdot)$, but its theoretical guarantees
only hold {\em in expectation} over $\Pr(c^*)$.
\wdp\ will then be extended to the persistent noisy setting through its 
variant Noisy Weighted Dichotomic Path (\nwdp).

\vspace{-0.05in}
We need a few ancillary definitions. First of all note that, in the noiseless setting, we have a clear hierarchical structure on the labels $y(i)$ of the internal nodes of $T$: Whenever a query reveals a label $y(i) = 0$, we know that all $i$'s ancestors will have label $0$. On the other hand, if we observe $y(i)=1$ we know that all internal nodes of subtree $T(i)$ have label $1$. Hence, disclosing the label of some node indirectly entails disclosing the labels of either its ancestors or its descendants. 
Given $T$, a bottom-up path is any path connecting a node with one of its ancestors in $T$. In particular, we call a {\em  backbone} path any bottom up path having maximal length. Given $i\in V$, we denote by $S_t(i)$ the {\em version space} at time $t$ associated with $T(i)$, i.e., the set of all cuts of $T(i)$ that are consistent with the labels revealed so far. For any node $j \neq i$, $S_t(i)$ splits into $S_{t}^{y(j)=0}(i)$ and $S_{t}^{y(j)=1}(i)$, the subsets of $S_t(i)$ obtained by imposing a further constraint on $y(j)$.

\vspace{-0.05in}
{\bf \ots~(One Third Splitting):} 
For all $i \in V$, \ots\, maintains over time the value $|S_t(i)|$, i.e., the size of $S_t(i)$, along with the forest $F$ made up of all maximal subtrees $T'$ of $T$ such that $|V(T')|>1$ and for which none of their node labels have been revealed so far. 
\ots\ initializes $F$ 
to contain $T$ only, and maintains $F$ updated over time, by picking any backbone of any subtree $T' \in F$, and visiting it in a bottom-up manner. See the details in Appendix \ref{as:proofs_ots}. The following theorem (proof in Appendix \ref{as:proofs_ots}) crucially relies on the fact that $\pi$ is a backbone path of $T'$, rather than an arbitrary path.

\vspace{-0.07in}
\begin{theorem}\label{t:ots}
On a tree $T$ with $n$ leaves, height $h$, and number of cuts $N$, \ots\, finds $c^*$ by making $\scO(\log N)$ queries. Moreover, an ad hoc data-structure exists that makes the overall running time $\scO(n+h\log N)$ and the space complexity $\scO(n)$.
\end{theorem}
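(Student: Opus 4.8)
The plan is to analyze the query count and the running time separately. For the query bound, I would argue that each query made by \ots\ shrinks the relevant version space by a constant fraction, so after $\scO(\log N)$ queries the version space is a singleton and $c^*$ is identified. The key is the ``one third'' splitting property promised by the name: when \ots\ picks a backbone path $\pi$ of some subtree $T' \in F$ and visits it bottom-up, I expect one can show that querying an appropriately chosen node $j$ on $\pi$ partitions the current version space $S_t(r(T'))$ so that both $|S_t^{y(j)=0}(r(T'))|$ and $|S_t^{y(j)=1}(r(T'))|$ are at least a $1/3$ fraction of $|S_t(r(T'))|$. The reason a backbone (maximal-length bottom-up path) is essential, rather than an arbitrary path, is that along a backbone the sizes $|S_t(i)|$ for $i$ ranging over the path increase monotonically from the bottom node (where the subtree is a single leaf or small, so size is small) up to $r(T')$ (where the size is the full $|S_t(r(T'))|$), and each step multiplies by roughly the size of the subtree hanging off to the side; consecutive values therefore cannot jump by more than a bounded ratio in the ``bad'' direction, which is exactly what lets an intermediate-value-type argument locate a node achieving the $1/3$--$2/3$ split. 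Summing a geometric series, the total number of queries over all subtrees ever placed in $F$ is $\scO(\log |S_0(T)|) = \scO(\log N)$; some care is needed because queries answered $y(j)=0$ split $T'$ into several child subtrees that get re-inserted into $F$, but the product of their version-space sizes is bounded by (a constant times) the parent's size, so a potential argument on $\sum_{T' \in F} \log |S_t(r(T'))|$ (or on $\log \prod_{T'\in F}|S_t(r(T'))|$) decreases by a constant at every query and the claimed bound follows.

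For the running time, the plan is to describe the supporting data structure. Precomputing all the $N(i)$ values takes $\scO(n)$ time via the recursion $N(i) = 1 + N(\lef(i))N(\ri(i))$ in a single postorder pass (the numbers are huge but we only ever need them up to a bounded-precision comparison, so arithmetic is $\scO(1)$ per node — this subtlety should be acknowledged). Maintaining $F$ requires, for each subtree currently in $F$, quick access to a backbone and the ability to walk it bottom-up while evaluating current version-space sizes; since each backbone has length at most $h$ and on it the binary-search-like location of the splitting node costs $\scO(\log N)$ queries each doing $\scO(1)$ work (or $\scO(h)$ work if we must recompute sizes along the path, folded into the $h\log N$ term), and since a query answered $1$ terminates that subtree while a query answered $0$ spawns children whose backbones are shorter, the total bookkeeping is $\scO(n + h\log N)$. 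Space is $\scO(n)$ because we store $\scO(1)$ bounded-precision data per node and $F$ is a partition-refinement of $T$'s nodes.

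The main obstacle I anticipate is making the ``one third'' split rigorous: proving that along a backbone there is always a node whose query yields a simultaneous lower bound on both sides of the version-space split, and in particular ruling out the case where the version space is so lopsided that no single node on the chosen backbone helps — this is where the maximality of the backbone must be used in an essential, quantitative way (e.g. the bottom of a backbone sits at a node all of whose strict descendants are already resolved, so the incremental multiplicative factors along the path are controlled). A secondary technical point is correctly amortizing the cost and the version-space-size potential across the dynamic forest $F$ when a single $0$-answer fragments a subtree into many pieces; I would handle this with the multiplicative identity $|S_t(r(T'))| \ge c\prod_{\text{children }T''}|S_t(r(T''))|$ implied by the $N(i)$ recursion, so that $\log$ of the product is a valid decreasing potential.
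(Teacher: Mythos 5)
Your overall architecture matches the paper's: a splitting lemma guaranteeing that every query leaves at least a $1/3$ fraction of the version space on each side, hence $|S_{t+1}(r)|\le \tfrac23 |S_t(r)|$ and $\scO(\log_{3/2} N)$ queries; and your potential $\log \prod_{T'\in F}|S_t(r(T'))|$ is exactly the paper's observation that the global version space factorizes over the forest $F$ (with no constant-factor slack needed). The gap is in the crux you yourself flag: your proposed mechanism for the one-third split would fail. It is not true that consecutive values of $|S_t(\cdot)|$ along a backbone ``cannot jump by more than a bounded ratio'': moving one step up the path multiplies the count by (one plus) the version-space size of the sibling subtree hanging off the path, and that sibling subtree can be arbitrarily large, so no bounded-ratio or intermediate-value argument on these node-wise sizes can locate a balanced query point.

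The paper's Lemma~\ref{l:ots} argues differently. Writing $j_0,\ldots,j_{h(\pi)}$ for the backbone nodes bottom-up and $j^{\mathrm{s}}_k$ for their siblings, it parametrizes the consistent labelings of the path by the cut position $z$ and decomposes $|S_t(r')|=\sum_{z}\scS_z$ with $\scS_z=\prod_{k\ge z}|S_t(j^{\mathrm{s}}_k)|$ (and $\scS_{h(\pi)}=1$). Two facts drive the proof: the sequence $\scS_z$ is non-increasing in $z$, and --- this is the only place backbone maximality enters --- the bottom of a backbone is a deepest leaf of $T'$, so its sibling contributes a factor $1$, giving $\scS_0=\scS_1$ and hence $\max_z \scS_z\le \tfrac12 |S_t(r')|$. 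A largest single term of at most half the total, combined with monotonicity, is what forces some suffix sum $\sum_{z\ge k}\scS_z$ (which equals $|S_t^{y(j_k)=1}(r')|$) to land in $[\tfrac13,\tfrac23]\,|S_t(r')|$, and the bottom-up scan of \ots\ finds exactly that node. Your remark that the bottom of a backbone has all descendants resolved gestures at this but controls the wrong quantity: what must be bounded is the largest term $\scS_z$ of this decomposition, not growth rates along the path. On the running time your plan is essentially the paper's (precompute $|S_0(i)|$ bottom-up in $\scO(n)$, then $\scO(h)$ work per query confined to the chosen backbone), except that you leave unspecified how a backbone of each new subtree of $F$ is found in amortized constant time --- the paper uses a depth-sorted array of leaves scanned once, right to left, over the entire run --- and your caveat about arbitrary-precision arithmetic on the exponentially large counts is a fair point that the paper silently ignores.
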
 
\vspace{-0.07in}
%
%
Hence, Theorem \ref{t:ots} ensures that, for all $c^*$, a time-efficient active learning algorithm exists whose number of queries is of the form $\log(1/\Pr(c^*))$, provided $\Pr(c^*) = 1/N(T)$ for all $c^*$. This query bound is fully in line with well-known results on splittable version spaces~\cite{da05,no11,ml18}, so we cannot make claims of originality. Yet, what is relevant here is that this splitting can be done {\em very efficiently}. 
%
We complement the above result with a {\em lower} bound holding in expectation over prior distributions on $c^*$. This lower bound depends in a detailed way on the structure of 
$T$.
%
%
%
Given tree $T$, with set of leaves $L$, and cut $c^*$, recall the definitions of $\AB(c^*)$ and $\LB(c^*)$ we gave in Section \ref{s:prel}. Let $T'_{c^*}$ be the subtree of $T$ whose nodes are $(\AB(c^*) \cup \LB(c^*)) \setminus L$, and then let\ 
\(
\kt=\left| L(T'_{c^*}) \right|~
\)
be the number of its leaves. For instance, in Figure \ref{f:2} (left), $T'_{c^*}$ is made up of the six nodes $i_1,\ldots,i_6$, so that $\kt = 3$, while in Figure \ref{f:2} (right), $T'_{c^*}$ has nodes $i_1,\ldots,i_{n-1}$, hence $\kt = 1$. Notice that we always have $\kt\le K$, but for many trees $T$, $\kt$ may be much smaller than $K$. A striking example is again provided by the cut in Figure \ref{f:2} (right), where $\kt = 1$, but $K = n$.
It is also helpful to introduce $L_{\mathrm{s}}(T)$, the set of all pairs of {\em sibling leaves} in $T$. For instance, in the tree of Figure \ref{f:2}, we have $|L_{\mathrm{s}}(T)| = 3$. One can easily verify that, for all $T$ we have
\(
\max_{c^*} \kt = |L_{\mathrm{s}}(T)| \leq \log_2 N(T)\,.
\)
%
%
We now show that there always exist families of prior distributions $\Pr(\cdot)$ 
such that the expected number of queries needed to find $c^*$ is $\Omega(\E[\kt])$. The quantity $\E[\kt]$ is our notion of {\em average (query) complexity}.
Since the lower bound holds in expectation, it also holds in the worst case. The proof can be found in Appendix \ref{as:lower}.
\vspace{-0.05in}
\begin{theorem}\label{t:LBnoiselessExistsP}
In the noiseless realizable setting, for any tree $T$, any positive integer $B\le |L_{\mathrm{s}}(T)|$, 
and any (possibly randomized) active learning algorithm $A$, there exists 
a prior distribution $\Pr(\cdot)$ over $c^*$ such that the {\em expected} (over $\Pr(\cdot)$ and $A$'s internal randomization) number of queries $A$ has to make in order to recover $c^*$ is lower bounded by $B/2$, while $B\le\E[\kt]\le 2B$, the latter expectation being over $\Pr(\cdot)$.
\end{theorem}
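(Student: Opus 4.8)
The plan is to build, for any $T$ and any $1\le B\le |L_{\mathrm{s}}(T)|$, an explicit prior $\Pr(\cdot)$ concentrated on a family of $2^B$ cuts indexed by $B$ independent fair coins placed at internal nodes of $T$, so that (i) recovering a coin requires a query dedicated to that node and (ii) $\kt$ stays in $[B,2B]$ over the whole family. Concretely, I would first exhibit a subtree $\sigma$ of $T$, rooted at $r$, with exactly $B$ leaves $v_1,\dots,v_B$, all of them internal nodes of $T$, such that every internal node of $\sigma$ either has both children in $\sigma$ or has exactly one child in $\sigma$ and the other child a leaf of $T$ (so no internal node of $T$ ``hangs off'' $\sigma$). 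Such a $\sigma$ is produced by growth: start with $\sigma=\{r\}$ (one leaf, $r$, internal since $n\ge 2$); repeatedly pick a current leaf that is an internal node of $T$ with two internal children and adjoin those two children (this increases the leaf count by exactly one and preserves the invariant), while a current leaf that is an internal node with a single internal child may be replaced by that child (keeping the count but letting growth continue). Growth halts only when all leaves of $\sigma$ are cherries (internal nodes whose children are both leaves of $T$), and then, by the invariant, $\sigma$ is the set of \emph{all} internal nodes of $T$, with $|L_{\mathrm{s}}(T)|$ leaves; hence every value $1,\dots,|L_{\mathrm{s}}(T)|$, in particular $B$, is attained along the way.

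\textbf{The prior and the $\kt$ bound.} Put $\Pr(i)=0$ for every internal node $i$ of $\sigma$, $\Pr(v_\ell)=\tfrac12$ for each leaf $v_\ell$ of $\sigma$, $\Pr(j)=1$ for $j\in\{\lef(v_\ell),\ri(v_\ell)\}$ and for every leaf of $T$ hanging off an internal node of $\sigma$, and anything for the remaining (strictly deeper) nodes. Plugging this into \eqref{e:prob} shows that $\Pr(\cdot)$ is uniform over the $2^B$ cuts in which, independently for each $\ell$, either $y(v_\ell)=1$ (the block $L(v_\ell)$ is one cluster) or $y(v_\ell)=0$ (it splits into $L(\lef(v_\ell))$ and $L(\ri(v_\ell))$), all other labels being forced. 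For such a cut $c^*$ the node set of $T'_{c^*}$ is $\sigma$ plus the internal children of those $v_\ell$ that split; a non-splitting $v_\ell$ and a splitting $v_\ell$ that is a cherry or has a leaf child each add one leaf to $T'_{c^*}$, while a splitting $v_\ell$ with two internal children adds two, so $\kt=B+(\text{number of such two-child splitters})\in[B,2B]$. Taking expectations, $B\le\E[\kt]\le 2B$ (in fact $\le\tfrac32 B$).

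\textbf{The query lower bound and the main obstacle.} Fix any cut $c^*$ in the support; to output it, $A$ must learn $y(v_\ell)$ for all $\ell$. In the noiseless setting a query $(x_a,x_b)$ returns $y(\lca(x_a,x_b))$; the only pairs whose answer depends on $y(v_\ell)$ are those with $x_a\in L(\lef(v_\ell))$ and $x_b\in L(\ri(v_\ell))$ (returning exactly $y(v_\ell)$), since every strict ancestor of $v_\ell$ is forced to label $0$ and every pair with $\lca$ strictly inside $T(v_\ell)$ returns $1$ whatever the coin is. As the $v_\ell$ are pairwise incomparable, no query bears on two coins, so $A$ must issue at least $B$ queries on every run; its expected (over $\Pr(\cdot)$ and its randomness) number of queries is therefore at least $B\ge B/2$, and the same holds in the worst case over $c^*$. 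The crux is the tension behind (i)--(ii): forcing queries wants as many mutually independent undecided coins as possible, while $\E[\kt]\le 2B$ forbids the cut from carrying much structure. The obvious attempt --- coins at sibling leaves, routing the cut down to them from the root --- fails because each internal subtree hanging off a routing path must then become its own cluster, inflating $\kt$ well past $2B$ on trees whose cherries are deep; the fix is to place the coins at (possibly non-cherry) internal nodes and use a skeleton that only branches or takes ``caterpillar'' steps, which removes that overhead. Proving that such a skeleton with exactly $B$ leaves always exists (the growth argument above) is the combinatorial heart of the proof.
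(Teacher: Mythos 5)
Your proposal is correct and follows essentially the same route as the paper's proof: a uniform prior over $2^B$ cuts obtained by placing an independent fair coin at each of the $B$ leaves of a root-containing skeleton subtree, the observation that a given coin can only be resolved by querying a pair whose lowest common ancestor is that leaf (so at least $B\ge B/2$ queries are needed, since the coin nodes are pairwise incomparable), and the bound $B\le\kt\le 2B$ holding for every cut in the support. Your explicit growth construction of the skeleton (which forbids internal subtrees of $T$ from hanging off it) is a more careful rendering of the paper's informal ``subtree with $B$ leaves obtained by visiting $T$ from the root'', and that extra care is indeed what guarantees the upper bound $\kt\le 2B$.
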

\vspace{-0.05in}

Next, we describe an algorithm that, unlike \ots, is indeed able to take advantage of the prior distribution, but it does so at the price of bounding the number of queries only {\em in expectation}.

\vspace{-0.03in}
{\bf \wdp~(Weighted Dichotomic Path)}: Recall prior distibution (\ref{e:prob}), collectively encoded through the values $\{\Pr(i)$, $i \in V\}$. As for \ots, we denote by $F$ the forest made up of all maximal subtrees $T'$ of $T$ such that $|V(T')|>1$ and for which none of their node labels have so far been revealed. $F$ is updated over time, and initially contains only $T$. We denote by $\pi(u, v)$ a bottom-up path in $T$ having as terminal nodes $u$ and $v$ (hence $v$ is an ancestor of $u$ in $T$). For a given cut $c^*$, and associated labels $\{y(i), i \in V\}$, any tree $T' \in F$, and any node $i\in V(T')$, we define\footnote
{
For 
definiteness, we set $y(\pa(r)) = 0$, that is, we are treating the parent of $r(T)$ as a \enquote{dummy super--root} with labeled $0$ since time $t=0$. Thus, according to this definition, $q(r)=\Pr(r)$.
}
\begin{equation}\label{e:qi}
q(i) = \Pr(y(i)=1\wedge y(\pa(i))=0)
     = \Pr(i)\cdot\prod_{j\in\pi(\pa(i), r(T'))}(1-\Pr(j))~.
\end{equation}
%
We then associate with any backbone path of the form $\pi(\ell, r(T'))$, where $\ell \in L(T')$, an entropy 
$H(\pi(\ell, r(T'))) = -\sum_{i\in \pi(\ell, r(T'))} q(i)\,\log_2 q(i)$. 
Notice that at the beginning we have $\sum_{i\in \pi(\ell, r(T))} q(i) = 1$ for all $\ell \in L$. This invariant will be maintained on all subtrees $T'$. The prior probabilities $\Pr(i)$ will evolve during the algorithm's functioning into posterior probabilities based on the information revealed by the labels. Accordingly, also the related values $q(i)$ w.r.t. which the entropy $H(\cdot)$ is calculated will change over time. 

\vspace{-0.05in}
Due to space limitations, \wdp's pseudocode is given 
in Appendix \ref{as:wdp}, but we have included an example of its execution in Figure \ref{fig:wdp}. At each round, \wdp\ finds the path 
whose entropy is maximized over all bottom-up paths $\pi(\ell, r')$, with $\ell \in L$ and $r' = r(T')$, where $T'$ is the subtree in $F$ containing $\ell$.
\wdp\ performs a binary search on such $\pi(\ell,r')$ to find the edge of $T'$ which is cut by $c^*$, 
taking into account the current values of $q(i)$ over that path.
Once a binary search terminates, \wdp\ updates $F$ and the probabilities $\Pr(i)$ at all nodes $i$ in the subtrees of $F$. 
%
See Figure \ref{fig:wdp} for an example. Notice that the
$\Pr(i)$ on the selected path become either $0$ (if above the edge cut by $c^*$) or $1$ (if below). In turn, this causes updates on all probabilities $q(i)$. 
\wdp\, continues with the
next binary search on the next path with maximum entropy at the current stage, discovering another edge cut by $c^*$, and so on, until $F$ becomes empty.
Denote by $\scP_{>0}$ the set of all priors $\Pr(\cdot)$ such that for all cuts $c$ of $T$ we have $\Pr(c)>0$. The proof of the following theorem is given in Appendix \ref{as:wdp}.
\begin{figure}[!t]
\begin{minipage}{0.65\textwidth}
\vspace{-0.0in}\hspace{-0.18in}
		\begin{subfigure}{.6\textwidth}
			\includegraphics[width=4.7cm]{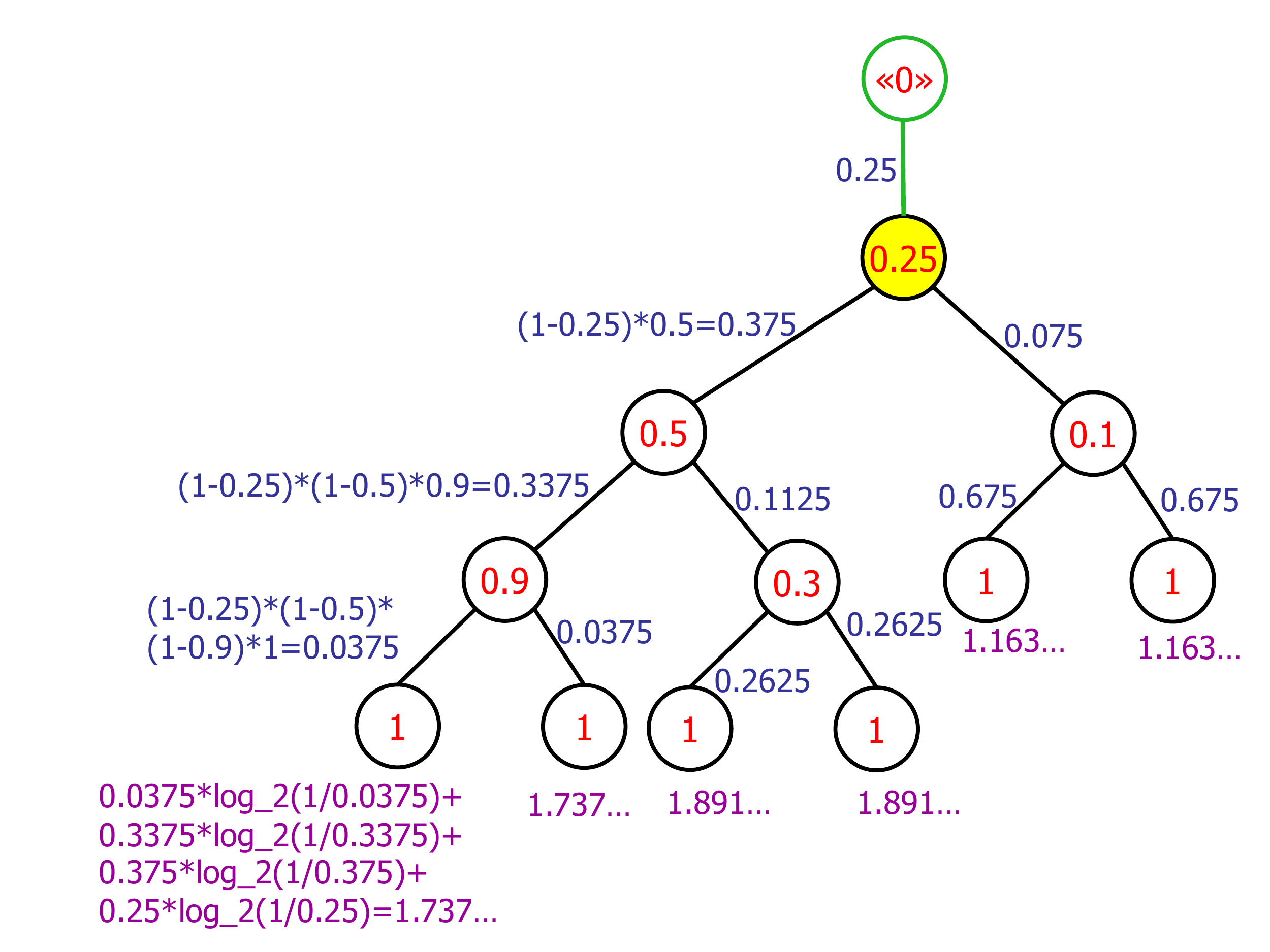}
			\label{fig:s-20-23}
		\end{subfigure}%
		\begin{subfigure}{.9\textwidth}
			\hspace{-0.4in}
			\includegraphics[width=4.7cm]{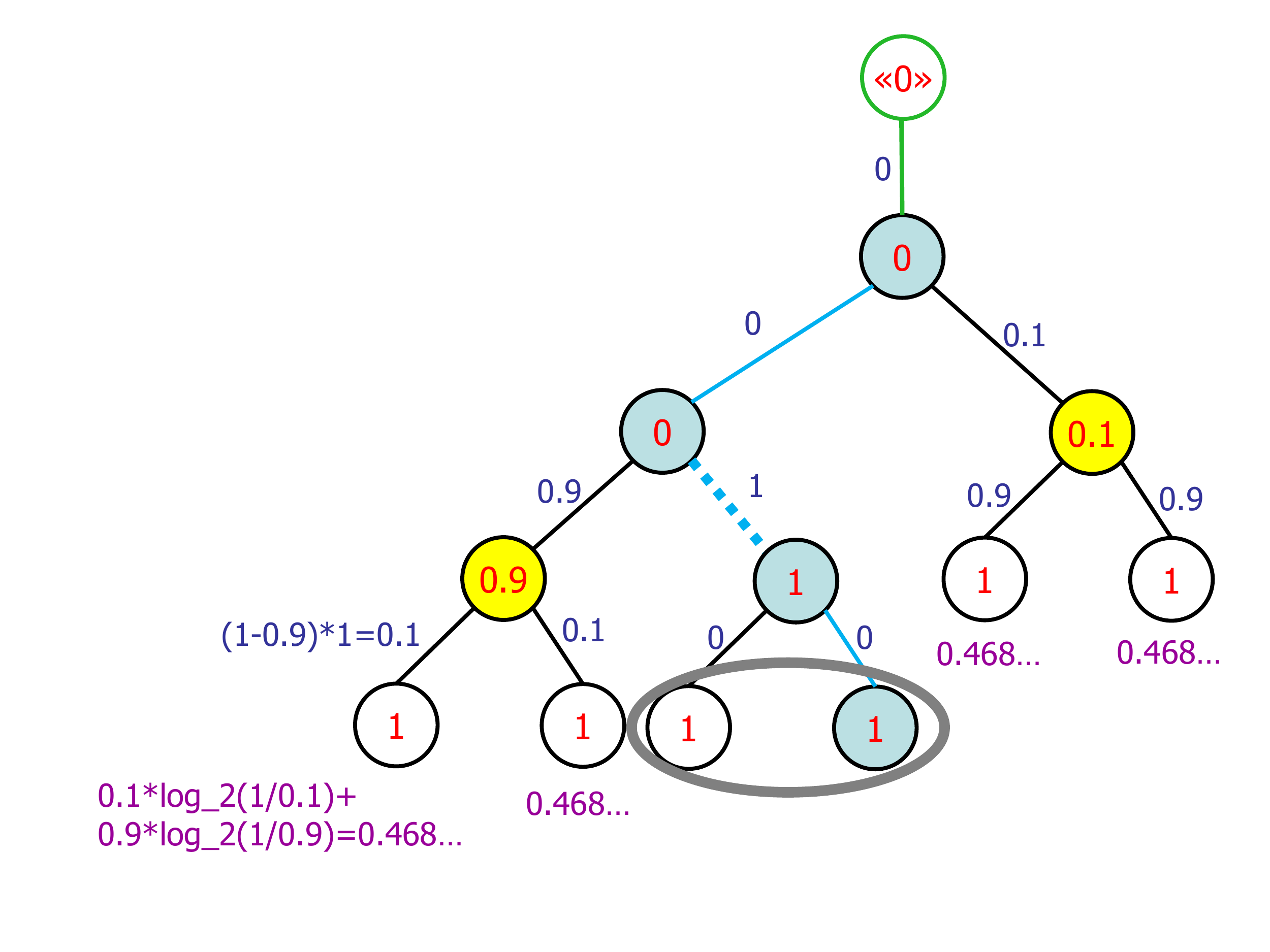}
			\label{fig:s-500-480}
		\end{subfigure}
\end{minipage}
\begin{minipage}[r]{0.35\textwidth}
\vspace{0.05in}
\caption{An example of input tree $T$ before (left) and after (right) the first binary search of \wdp. The green node is a dummy super-root.
The nodes in yellow are the roots of the subtrees currently included in forest $F$. The numbers in red within each node $i$ indicate the probabilities $\Pr(i)$, while the $q(i)$ values are in blue, and viewed here as 
associated with edges $(\pa(i),i)$. The\label{fig:wdp}}
\end{minipage}%
\vspace{-0.1in}
{\small 
magenta numbers at each leaf $\ell$ give the entropy $H(\pi(\ell,r(T')))$, where $r(T')$ is the root of the subtree in $F$ that\ \ contains both $\ell$ and $r(T')$. 
{\bf Left:} The input tree $T$ at time $t=0$. No labels are revealed, 
and no clusters of $\CC(c^*)$ are found. 
{\bf Right:} Tree $T$ after a full binary search has been performed on the depicted light blue path. Before this binary search, that path connected a leaf of a subtree in $F$ to its root (in this case, $F$ contains only $T$). The selected path is the one maximazing entropy within the forest/tree on the left. The dashed line indicates the edge of $c^*$ found by the binary search. The red, blue and magenta numbers are updated accordingly to the result of the binary search. The leaves enclosed in the grey ellipse are now known to form a cluster of $\CC(c^*)$.}
\vspace{-0.2in}
\end{figure}
%
%
\vspace{-0.05in}
\begin{theorem}\label{t:wdp1}
In the noiseless realizable setting, for any tree $T$ of height $h$, any prior distribution $\Pr(\cdot)$ over $c^*$, such that $\Pr(\cdot) \in \scP_{>0}$, the expected number of queries made by \wdp\ to find $c^*$ is
\(
\scO\left(\E\left[\kt\right]\log h\right),
\) 
the expectations being over $\Pr(\cdot)$.
\end{theorem}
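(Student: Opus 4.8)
The only randomness is the draw of $c^*\sim\Pr(\cdot)$, since \wdp\ is deterministic. Let $S$ be the number of binary searches \wdp\ performs before the forest $F$ is exhausted, let $\pi_1,\pi_2,\dots$ be the (backbone) paths they are run on, and let $\mathrm{cost}_s$ be the number of label queries issued during the $s$-th search, so that the total number of queries is $\sum_{s=1}^{S}\mathrm{cost}_s$. I would derive the theorem from two ingredients: \emph{(A)} almost surely $S\le\kt$; and \emph{(B)} letting $\mathcal{F}_{s}$ be the $\sigma$-algebra generated by all query answers obtained through the end of the $s$-th search, one has $\E[\mathrm{cost}_{s}\mid\mathcal{F}_{s-1}]=\scO(\log h)$ on $\{S\ge s\}$. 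Since $\{S\ge s\}\in\mathcal{F}_{s-1}$ (whether an $s$-th search is started is decided by the labels already revealed), combining \emph{(A)} and \emph{(B)} through a Wald-type identity gives
\begin{align*}
\E\Bigl[\textstyle\sum_{s=1}^{S}\mathrm{cost}_s\Bigr]
 &=\sum_{s\ge 1}\E\bigl[\Ind{S\ge s}\,\E[\mathrm{cost}_s\mid\mathcal{F}_{s-1}]\bigr]\\
 &\le \scO(\log h)\sum_{s\ge 1}\Pr(S\ge s)=\scO(\log h)\,\E[S]\le \scO\!\left(\E[\kt]\log h\right)~,
\end{align*}
which is the claim.

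\textbf{Ingredient (A).} Fix a realization of $c^*$. Each binary search is run on a \emph{backbone} path $\pi(\ell,r')$ of some $T'\in F$, where $r'=r(T')$ and $\ell\in L(T')$ is a \emph{deepest} leaf of $T'$; the labels $y(\cdot)$ are monotone along it, so the search pins down the unique edge $(\pa(v),v)$ of $c^*$ on that path, with $v$ below the cut, whence $T(v)$ is one cluster of $\CC(c^*)$. I would charge this search to a leaf $w$ of $T'_{c^*}$ as follows. If $v\notin L$, then $v\in\LB(c^*)\setminus L$, so $v$ is itself a leaf of $T'_{c^*}$, and I take $w=v$. If $v\in L$ (necessarily $v=\ell$), then because $\ell$ is a deepest leaf of $T'$ its sibling cannot have strictly larger depth in $T'$, which forces the sibling to be a leaf of $T$ as well; thus $\pa(\ell)$ has two leaf children and, being above $c^*$, is a \emph{childless} node of $T'_{c^*}$, i.e.\ a leaf, and I take $w=\pa(\ell)$. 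Finally, the charges are injective: once a search terminates, $T(v)$ and the portion of $\pi(\ell,r')$ from $v$ up to $r'$ are permanently resolved and disappear from $F$, so neither $v$ nor $\pa(\ell)$ can be the charge of any later search. Hence $S\le|L(T'_{c^*})|=\kt$.

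\textbf{Ingredient (B).} Condition on $\mathcal{F}_{s-1}$ and suppose $S\ge s$; let $\pi_s=\pi(\ell,r')$ be the path selected for the $s$-th search. As a bottom-up path in $T$, $\pi_s$ has at most $h+1$ nodes. \wdp\ maintains the invariant $\sum_{i\in\pi_s}q(i)=1$, and, given $\mathcal{F}_{s-1}$, the posterior probability that the edge of $c^*$ on $\pi_s$ is $(\pa(i),i)$ equals $q(i)$; the $q$-weighted (weight-balanced) binary search \wdp\ runs on $\pi_s$ therefore locates that edge in at most $H(\pi_s)+\scO(1)$ queries in expectation over this posterior (a Gilbert--Moore/Mehlhorn-type bound for weight-balanced bisection), where $H(\pi_s)=-\sum_{i\in\pi_s}q(i)\log_2 q(i)$. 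Since the entropy of a distribution supported on at most $h+1$ points is at most $\log_2(h+1)$, we obtain $\E[\mathrm{cost}_s\mid\mathcal{F}_{s-1}]\le\log_2(h+1)+\scO(1)=\scO(\log h)$, as required. (If instead \wdp\ used a count-balanced bisection, each search would cost $\le\lceil\log_2(h+1)\rceil$ queries deterministically and the argument would only simplify.)

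\textbf{Main obstacle.} The crux is ingredient \emph{(A)}: although $c^*$ may induce $K$ clusters, \wdp\ performs only $\scO(\kt)$ searches, and this is precisely where insisting on \emph{backbone} (maximal-length) paths — rather than arbitrary bottom-up paths — is indispensable. Maximality is exactly what rules out the troublesome scenario in which a search ``ends'' at a shallow leaf of $T$ for which no node of $T'_{c^*}$ can be blamed: a deepest leaf of the current subtree is forced to have a leaf sibling, making its parent a bona fide leaf of $T'_{c^*}$. Ingredient \emph{(B)} and the final assembly are comparatively routine, modulo the standard weight-balanced-bisection guarantee and the elementary inequality $H(\pi)\le\log_2(\#\text{nodes on }\pi)$.
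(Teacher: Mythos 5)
There is a genuine gap, and it sits exactly where you locate the crux. Your ingredient (A) assumes that each binary search of \wdp\ is run on a \emph{backbone} path, i.e., a leaf-to-root path starting from a \emph{deepest} leaf of the current subtree $T'\in F$. That is not how \wdp\ is defined: backbone paths are what \ots\ uses, whereas \wdp\ selects, among \emph{all} bottom-up paths $\pi(\ell,r(T'))$ with $\ell\in L(T')$, the one of maximum entropy with respect to the current $q(\cdot)$. The selected leaf $\ell$ therefore need not be deepest, and its sibling may be an internal node $j'$ of $T$. In that situation your charging breaks in precisely the case you flag as troublesome: if the search ends at the singleton $\{\ell\}$ with $\pa(\ell)\in\AB(c^*)$ and sibling $j'$ internal, then $j'\in\AB(c^*)\cup\LB(c^*)$ and $j'\notin L$, so $j'$ is a node of $T'_{c^*}$ and $\pa(\ell)$ is \emph{not} a leaf of $T'_{c^*}$; your map from searches to leaves of $T'_{c^*}$ is then undefined, and $S\le\kt$ does not follow from your argument.

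What closes this gap in the paper is not path maximality but the entropy-maximization rule itself: Lemma~\ref{l:WDPentropyDifference} shows that a path starting from such a leaf $\ell$ always has strictly smaller entropy than the path from some leaf $\ell'\in L(j')$, via the identity $q(\ell)=\sum_{v\in\pi(\ell',j')}q(v)$ and strict subadditivity of $x\mapsto -x\log_2 x$; this is also exactly where the hypothesis $\Pr(\cdot)\in\scP_{>0}$ enters (it keeps all relevant $q$'s in $(0,1)$), a hypothesis your proof never uses, which is a symptom of the missing step. Hence such paths are never selected, the corresponding cut edges are only revealed indirectly, and the number of selected paths is exactly $\kt$ (Lemma~\ref{l:WDPnumberPaths}, whose case analysis also covers the remaining boundary configurations). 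Your ingredient (B) and the Wald/stopping-time assembly are sound and essentially coincide with the paper's Lemma~\ref{l:piQueries} and the computation in the proof of Theorem~\ref{t:wdp1}; what must be repaired is the counting of searches, replacing the backbone claim by the entropy-comparison argument (or some equivalent use of the selection rule).
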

\vspace{-0.05in}
For instance, in the line graph of Figure \ref{f:2} (right), the expected number of queries is $\scO(\log n)$ for any prior $\Pr(\cdot)$, while if $T$ is a complete binary tree with $n$ leaves, and we know that $\CC(c^*)$ has $\scO(K)$ clusters, we can set $\Pr(i)$ in (\ref{e:prob}) as $\Pr(i) = 1/\log K$, which would guarantee $\E[\kt] = \scO(K)$, and a bound on the expected number of queries of the form $\scO(K\log\log n)$. By comparison, observe that the results in \cite{da05,gk17,td17} would give a query complexity which is at best $\scO(K\log^2 n)$, while those in \cite{no11,ml18} yield at best $\scO(K\log n)$. In addition, we show below (Remark \ref{r:time}) that our algorithm has very compelling running time guarantees.

\vspace{-0.05in}
It is often the case that a linkage function generating $T$ also tags each internal node $i$ with a {\em coherence level} $\alpha_i$ of $T(i)$, which is typically increasing as we move downwards from root to leaves. A common situation in hierarchical clustering is then to figure out the ``right'' level of granularity of the flat clustering we search for by defining parallel {\em bands} of nodes of similar coherence where $c^*$ is possibly located. For such cases, a slightly more involved guarantee for \wdp\, is contained in Theorem \ref{t:wdp2} in Appendix \ref{as:wdp}, where the query complexity depends in a more detailed way on the interplay between $T$ and the prior $\Pr(\cdot)$. In the above example, if we have $b$-many edge-disjoint bands, Theorem \ref{t:wdp2} replaces factor $\log h$ of Theorem \ref{t:wdp1} by $\log b$.

\vspace{-0.03in}
{\bf \nwdp~(Noisy Weighted Dichotomic Path)}: This is a robust variant of \wdp\ that copes with persistent noise. 
%
Whenever a label $y(i)$ is requested, \nwdp\ determines its value by a majority vote over randomly selected pairs from $L(\lef(i))\times L(\ri(i))$. Due to space limitations, all details are contained in Appendix \ref{as:nwdp}.
%
%
The next theorem quantifies \nwdp's performance in terms of a tradeoff between the expected number of queries and the distance to the noiseless ground-truth matrix $\Sigma^*$.
%
%
\vspace{-0.07in}
\begin{theorem}\label{t:nwdp}
In the noisy realizable setting, given any input tree $T$ of height $h$, any cut $c^* \sim \Pr(\cdot)\in\scP_{>0}$, and any $\delta\in(0,1/2)$, \nwdp\, outputs with probability $\geq 1-\delta$ (over the noise in the labels) a clustering $\hCC$ such that $\frac{1}{n^2}\,d_H(\Sigma^*,\hCC)=\scO\left(\frac{1}{n}\,\frac{(\log(n/\delta))^{3/2}}{(1-2\lambda)^{3}}\right)$ by asking
$\scO\left(\frac{\log(n/\delta)}{(1-2\lambda)^{2}}\,\E\kt\log h\right)$ queries in expectation (over $\Pr(\cdot)$).
\end{theorem}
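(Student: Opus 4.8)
\textit{Proof proposal.}

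\emph{Overview and query count.} I would view \nwdp\ as \wdp\ in which every request for a node label $y(i)$ is answered by a majority vote over $m:=\Theta\!\big(\log(n/\delta)/(1-2\lambda)^2\big)$ pairs drawn uniformly at random from $\scF(i):=L(\lef(i))\times L(\ri(i))$, using all of $\scF(i)$ when $|\scF(i)|<m$. Each requested label then costs at most $m$ queries, and on the high-probability event $\scE$ established below --- on which every vote cast on a node with $|\scF(i)|\ge m$ is correct --- \nwdp\ reproduces \wdp\ up to corrections confined to nodes with $|\scF(i)|<m$, so by Theorem~\ref{t:wdp1} it requests $\scO(\E[\kt]\log h)$ labels in expectation over $\Pr(\cdot)$; on the complement of $\scE$ it still performs only $\scO(n)$ binary searches of $\scO(\log h)$ queries each (each search strictly prunes $F$), whose $\scO(\delta\,n\,m\log h)$ contribution to the expectation is lower order. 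This gives the stated $\scO\!\big(\tfrac{\log(n/\delta)}{(1-2\lambda)^2}\,\E[\kt]\,\log h\big)$ query bound.

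\emph{Reliable votes.} The persistent noise is a uniformly random size-$\lfloor\lambda\binom n2\rfloor$ subset of the $\binom n2$ pairs, so the number of flipped pairs inside a fixed $\scF(i)$ is hypergeometric with mean $\lambda|\scF(i)|$; by Hoeffding's inequality for sampling without replacement its fraction exceeds $\lambda+\tfrac{1-2\lambda}{4}$ with probability $e^{-\Omega((1-2\lambda)^2|\scF(i)|)}$, and on the complement the reported majority differs from $y(i)$ with probability $e^{-\Omega((1-2\lambda)^2\min\{m,|\scF(i)|\})}$ (a second Hoeffding bound over the sampling when $|\scF(i)|\ge m$). Taking the constant in $m$ large and calling $i$ \emph{light} when $|\scF(i)|<m$, a union bound over the at most $n$ internal nodes shows that with probability $\ge1-\delta$ every vote \nwdp\ casts on a non-light node is correct; let $\scE$ be this event. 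Since $m=\Theta(\log(n/\delta)/(1-2\lambda)^2)$, a light node has $|\scF(i)|=\scO(\log(n/\delta)/(1-2\lambda)^2)$ and, when voted on, errs with probability $p_i\le e^{-\Omega((1-2\lambda)^2|\scF(i)|)}$ over the noise.

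\emph{Localising the error.} Condition on $\scE$. Along a bottom-up path $(v_1=\ell,v_2,\dots,v_L)$ the true labels are monotone ($1$'s then $0$'s from the leaf up), so a binary search on it never discards the true transition edge from its candidate interval until it queries a light node whose vote is wrong; since $|L(v_j)|\ge j$ and $|\scF(v_\mu)|=|L(v_{\mu-1})|\cdot|L(w)|$ for the off-path child $w$ of $v_\mu$, a light $v_\mu$ has $|L(v_{\mu-1})|<m$, hence $\mu\le m$, so such an error can occur only at a shallow position and forces the true transition itself to be shallow. Tracking \wdp's updates of $F$, the disagreement set $\{i:y(i)\ne\widehat y(i)\}$ restricted to that path is then a block of $\scO(m)$ consecutive nodes starting at a light node, and over the whole run it is the disjoint union, over the wrong light votes, of such blocks. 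Using $d_H(\Sigma^*,\hCC)=2\sum_{i:y(i)\ne\widehat y(i)}|\scF(i)|$ (where $\widehat y$ labels each node by whether it lies below the cut inducing $\hCC$), the identity $\sum_{i\in V\setminus L}|\scF(i)|=\binom n2$, and the per-block estimate $\sum_{j\in\mathrm{block}}|\scF(v_j)|=\scO(m)\cdot\max_{j\in\mathrm{block}}|\scF(v_j)|$, the Hamming damage produced by a wrong light vote at $v_\mu$ is controlled by $|\scF(v_\mu)|$ and the few leaf sets adjacent to $v_\mu$ on the path.

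\emph{Summation, concentration, and the main obstacle.} Grouping light nodes by $|\scF(i)|\in[2^{k},2^{k+1})$ --- of which there are at most $\min\{n,\binom n2\, 2^{-k}\}$ --- and using $\sum_{k\ge0}2^{k}e^{-\Omega((1-2\lambda)^2 2^{k})}=\scO(1/(1-2\lambda)^2)$, the previous paragraph bounds $\E[d_H(\Sigma^*,\hCC)]$ by $\scO\!\big(n\cdot\mathrm{polylog}(n/\delta)/(1-2\lambda)^{\scO(1)}\big)$, and a Bernstein/Freedman-type tail bound over the weakly dependent wrong-light-vote events upgrades this to $d_H(\Sigma^*,\hCC)=\scO\!\big(n\,(\log(n/\delta))^{3/2}/(1-2\lambda)^3\big)$ with probability $\ge1-\delta$; dividing by $n^2$ and intersecting with $\scE$ yields the claim. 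The hard part is exactly this last step: one must (i) show rigorously that a binary search misled by a light vote can only corrupt a short block whose $\scF$-mass is controlled by that single light node --- in particular that it cannot slice off a large true cluster far from any light node --- and (ii) replace the crude expectation/Markov argument by a variance-sensitive concentration inequality with merely logarithmic dependence on $1/\delta$, which is what produces the extra $\sqrt{\log(n/\delta)}/(1-2\lambda)$ factor beyond the query-complexity scale $\log(n/\delta)/(1-2\lambda)^2$.
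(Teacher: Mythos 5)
Your proposal hinges on a step that you yourself flag as unresolved, and it is precisely the step that fails: if light nodes (those with $|\scF(i)|<m$) are actually queried, a single wrong majority vote at such a node cannot be shown to corrupt only a ``short block whose $\scF$-mass is controlled by that single light node.'' A light node lies within the bottom $m$ positions of its path, but its true label can be $1$ while the true transition sits far above; a wrong answer $0$ sends the binary search to the bottom of the path, the declared cut edge lands below position $m$, and the true cluster $L(u^*)$ at the true transition is then fragmented into the small bottom piece plus the leaf sets of all off-path subtrees hanging between the declared and the true cut. Two of those pieces can each contain a constant fraction of $L(u^*)$, so the Hamming damage can be $\Omega(|L(u^*)|^2)$ (potentially $\Omega(n^2)$), utterly uncontrolled by $|\scF|$ at the light node, and the probability of the triggering wrong vote is a constant when $|\scF(i)|=\scO(1)$. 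Your subsequent dyadic summation and the Bernstein/Freedman step inherit this gap (and the claimed $(\log(n/\delta))^{3/2}/(1-2\lambda)^3$ form is only guessed, not derived). A smaller issue: on the bad event your $\scO(\delta\, n\, m\log h)$ query contribution is not lower order when $\delta$ is a constant and $\E[\kt]\ll n$; the theorem's query bound is meant to hold on the good noise event, so this accounting is not needed, but as written your unconditional bound does not follow.

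The paper's \nwdp\ avoids the problem by never querying light nodes at all: in a preprocessing step every internal node $i$ with $|\scL(i)|<\alpha\log(n/\delta)/(1-2\lambda)^{2}$ is assigned label $1$ (declared below the cut) and the prior is updated, and \wdp\ is then run on the truncated tree $T_{\lambda}$ whose queried nodes all have at least $\Theta(\log(n/\delta)/(1-2\lambda)^{2})$ available pairs, so a Chernoff plus union bound over the at most $n-1$ internal nodes makes \emph{every} majority vote correct simultaneously with probability $1-\delta$. On that event the run coincides with \wdp\ on $T_{\lambda}$, the number of selected paths is $\widetilde{K}(T_{\lambda},\hc)\le\kt$ (the cut found on $T_{\lambda}$ merges zero or more clusters of $c^*$), and the query bound follows from the \wdp\ lemmas with the modified prior. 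Crucially, the error bound is then purely deterministic: $d_H(\Sigma^*,\hCC)\le\sum_{i\in L(T_{\lambda})}|L(i)|^{2}$, where each truncated subtree has $|L(i)|=\scO(f)$ with $f=\Theta(\log(n/\delta)/(1-2\lambda)^{2})$ and the number of such subtrees is $\scO(n/\sqrt{f})$, giving $\scO(n f^{3/2})=\scO\bigl(n(\log(n/\delta))^{3/2}/(1-2\lambda)^{3}\bigr)$ with no concentration argument over error events at all. If you want to salvage your route, you would have to either prove the localization claim (which the counterexample sketch above suggests is false for adversarial priors/trees) or modify the algorithm as the paper does.
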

\vspace{-0.07in}
%

\vspace{-0.07in}
\begin{remark}\label{r:time}
Compared to the query bound in Theorem \ref{t:wdp1}, the one in Theorem \ref{t:nwdp} adds a factor due to noise. The very same extra factor is contained in the bound of \cite{ms17b}.
Regarding the running time of \wdp\,, the version we have described can be naively implemented to run in $\scO(n\,\E\kt)$ expected time overall.
A more time-efficient variant of \wdp\ exists for which Theorem~\ref{t:wdp1} and Theorem~\ref{t:wdp2} still hold, that requires 
$\scO(n+h\,\E\kt)$ expected time. Likewise, an efficient variant of \nwdp\ exists for which Theorem~\ref{t:nwdp} holds, that takes 
$\scO\left(n+\left(h+\frac{\log^2 n }{(1-2\lambda)^{2}} \right)\E\kt\right)$ expected time.
\end{remark}
\vspace{-0.05in}

\newcommand{\nr}{{\sc{nr}}}

\vspace{-0.1in}
\section{Selective sampling in the non-realizable case}\label{s:nonrealizable}
\vspace{-0.1in}
In the non-realizable case, we adapt to our clustering scenario the importance-weighted algorithm in \cite{b+10}. The algorithm is a selective sampler that proceeds in a sequence of rounds $t = 1, 2,\ldots $. In round $t$ a pair $(x_{i_t},x_{j_t})$ is drawn at random from distribution $\DD$ over the entries of a given ground truth matrix $\Sigma$, and the algorithm produces in response a probability value $p_t = p_t(x_{i_t},x_{j_t})$. A Bernoulli variable $Q_t \in \{0,1\}$ is then generated with $\Pr(Q_t = 1) = p_t$, and if $Q_t =1$ the label $\sigma_t = \sigma(x_{i_t},x_{j_t})$ is queried, and the algorithm updates its internal state; otherwise, we skip to the next round. The way $p_t$ is generated is described as follows. Given tree $T$, the algorithm maintains at each round $t$ an importance-weighted empirical risk minimizer cut ${\hat c_t}$, defined as \ 
\(
{\hat c_t} = \argmin_{c} \err_{t-1}(\CC(c))~,
\)
where the ``argmin'' is over all cuts $c$ realized by $T$, and
\(
\err_{t-1}(\CC) = \frac{1}{t-1} \sum_{s=1}^{t-1} \frac{Q_s}{p_s}\,\{ \sigma_{\CC}(x_{i_s},x_{j_s})  \neq \sigma_s  \}~,
\)
being $\{\cdot\}$ the indicator function of the predicate at argument. This is paired up with a {\em perturbed} empirical risk minimizer
\(
{\hat c'_t} = \argmin_{c\,:\, \sigma_{\CC(c)}(x_{i_t},x_{j_t}) \neq \sigma_{\CC(\hat c_t)}(x_{i_t},x_{j_t}) } \err_{t-1}(\CC(c))~,
\)
the ``argmin'' being over all cuts $c$ realized by $T$ that disagree with ${\hat c_t}$ on the current pair $(x_{i_t},x_{j_t})$. The value of $p_t$ is a function  of
$d_t = \err_{t-1}(\CC({\hat c'_t})) - \err_{t-1}(\CC({\hat c_t}))$, of the form 
\begin{equation}\label{e:pt}
p_t = \min \left\{1,\scO\left(1/d_t^2 + 1/d_t\right) \log ((N(T)/\delta)\log t)/t \right\}~,
\end{equation}
where $N(T)$ is the total number of cuts realized by $T$ (i.e., the size of our comparison class), and $\delta$ is the desired confidence parameter.
Once stopped, say in round $t_0$, the algorithm gives in output cut ${\hat c_{t_0+1}}$, and the associated clustering $\CC({\hat c_{t_0+1}})$. Let us call the resulting algorithm \nr\ (Non-Realizable).

Despite $N(T)$ can be exponential in $n$,
there are very efficient ways of computing ${\hat c_t}$, ${\hat c'_t}$, and hence $p_t$ at each round. In particular, an ad hoc procedure exists that incrementally computes these quantities
by leveraging the sequential nature of \nr.
For a given $T$, and constant $K \geq 1$, consider the class $\CCC(T,K)$ of cuts inducing clusterings with at most $K$ clusters. Set $R^* = R^*(T,\DD) = \min_{c \in \CCC(T,K)} \Pr_{(x_i,x_j) \sim \DD}\left( \sigma(x_i,x_j) \neq \sigma_{\CC(c)}(x_i,x_j) \right)$, and $B_{\delta}(K,n) = K\log n+\log(1/\delta)$. The following theorem is an adaptation 
of a result in \cite{b+10}. See Appendix \ref{as:proofs_nonrealizable} for a proof.
\vspace{-0.05in}
\begin{theorem}\label{t:nonrealizable}
Let $T$ have $n$ leaves and height $h$. Given confidence parameter $\delta$, 
for any $t \geq 1$, with probability at least $1-\delta$, the excess risk (\ref{e:excessrisk}) achieved by the clustering $\CC({\hat c_{t+1}})$ computed by \nr\, w.r.t. the best cut in class $\CCC(T,K)$ is bounded by
\(
\scO\left(\sqrt{\frac{B_{\delta}(K,n)\log t}{t}} + \frac{B_{\delta}(K,n)\log t}{t}\right),
\)
while the (expected) number of labels $\sum_{s = 1}^t p_s$ is bounded by
\(
\scO\left( \theta \left(R^* t + \sqrt{t\,B_{\delta}(K,n)\log t} + B_{\delta}(K,n)\log^3 t \right) \right),
\)
where $\theta = \theta(\CCC(T,K),\DD)$ is the disagreement coefficient of $\CCC(T,K)$ w.r.t. distribution $\DD$. In particular, when $\DD$ is uniform we have 
$\theta \leq K$. Moreover, there exists a fast implementation of \nr\ whose expected running time per round is $\E_{(x_i,x_j) \sim \DD} [\de(\lca(x_i,x_j))] \leq h$, where $\de(\lca(x_i,x_j))$ is the depth in $T$ of the lowest common ancestor of $x_i$ and $x_j$. 
\end{theorem}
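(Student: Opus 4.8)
The plan is to split the statement into three essentially independent parts: (i) the excess-risk bound, (ii) the label-complexity bound via the disagreement coefficient, and (iii) the running-time bound for the fast implementation. Parts (i) and (ii) should follow from the analysis of the importance-weighted active learner of \cite{b+10}, instantiated with our hypothesis class $\CCC(T,K)$; the only work is to verify that the relevant complexity-measure plugged into their theorem can be taken to be $B_{\delta}(K,n) = K\log n + \log(1/\delta)$, and to read off the specialization $\theta \leq K$ under uniform $\DD$. Part (iii) is where the tree structure must be exploited, and I expect it to be the main obstacle.

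For (i)–(ii), I would first observe that the relevant ``capacity'' term governing both the deviation bounds and the query rate in \cite{b+10} is (the logarithm of) the size of the comparison class, which here is $\log N(T)$; but for the subclass $\CCC(T,K)$ of cuts with at most $K$ clusters, a cut is determined by choosing the at-most-$K$ leaves of the induced subtree $T'_{c^*}$ (equivalently by a union of at most $K$ subtrees), so $|\CCC(T,K)| \le n^{\scO(K)}$ and hence $\log|\CCC(T,K)| = \scO(K\log n)$. Substituting this for $\log N(T)$ in $p_t$ (formula \eqref{e:pt}) and in their generalization-error and query-complexity theorems yields the stated $\sqrt{B_{\delta}(K,n)\log t / t} + B_{\delta}(K,n)\log t/t$ excess risk and the $\theta(R^* t + \sqrt{t B_{\delta}(K,n)\log t} + B_{\delta}(K,n)\log^3 t)$ label bound essentially verbatim. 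The bound $\theta \le K$ for uniform $\DD$ I would get by bounding the disagreement region: two cuts in $\CCC(T,K)$ that are within excess-error $r$ of $\hc$ can differ only on pairs whose lowest common ancestor lies in one of $\scO(K)$ ``active'' subtrees near the boundary, and a uniform-probability-mass computation over such pairs gives a region of mass $\scO(K\cdot r)$, so $\theta = \sup_r (\text{mass})/r \le \scO(K)$; I would state this cleanly and defer the constant-chasing to the appendix.

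For (iii), the claim is that one round of \nr\ — computing $\hc_t$, $\hc'_t$, and therefore $d_t$ and $p_t$ — can be done in expected time $\E[\de(\lca(x_i,x_j))] \le h$. The key idea is incremental maintenance: the empirical-risk functional $\err_{t-1}(\CC(c))$ decomposes additively over the nodes of $T$ (each queried pair $(x_{i_s},x_{j_s})$ with weight $Q_s/p_s$ contributes to exactly those nodes $i$ for which $i=\lca(x_{i_s},x_{j_s})$ determines agreement/disagreement, namely $\lca(x_{i_s},x_{j_s})$ and its ancestors/descendants depending on the sign), so the minimizing cut over the whole tree can be computed by a bottom-up dynamic program storing at each node $i$ the optimal value of $\err$ restricted to $T(i)$ under $y(i)=0$ and under $y(i)=1$ — exactly the recursion $N(i)=1+N(\lef(i))N(\ri(i))$ but with ``min of error'' replacing ``count''. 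When a new labeled pair arrives at round $t$ with $Q_t=1$, only the node $v=\lca(x_{i_t},x_{j_t})$ and the $\de(v)$ ancestors of $v$ on the path to the root have their stored DP values changed, so the update (and the recomputation of $\hc_t$, and of the perturbed $\hc'_t$, which is the same DP with one bit of the prediction at $v$ forced to flip) costs $\scO(\de(v))$; taking expectation over $(x_{i_t},x_{j_t})\sim\DD$ gives $\E[\de(\lca(x_i,x_j))] \le h$. The main obstacle, and the part deserving care, is the perturbed minimizer $\hc'_t$: one must argue that constraining $\sigma_{\CC(c)}(x_{i_t},x_{j_t})\neq\sigma_{\CC(\hc_t)}(x_{i_t},x_{j_t})$ amounts to forcing the ``merge/split at $v$'' bit of the DP at node $v=\lca(x_{i_t},x_{j_t})$ to the opposite value and re-propagating only up the $v$-to-root path, so it too costs $\scO(\de(v))$ and does not require recomputing the DP from scratch; I would spell this out in the appendix and merely assert it here.
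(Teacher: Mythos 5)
Your proposal is correct and takes essentially the same route as the paper: parts (i)--(ii) are obtained exactly as in the paper by bounding $|\CCC(T,K)| = \scO(n^K)$ and plugging into Theorems 2 and 3 of \cite{b+10}, with the same ``at most $K$ disjoint cluster-perturbations, each of mass at most $r$'' argument giving $\theta \le K$ under uniform $\DD$; and your incremental dynamic program with path-to-root updates at $\lca(x_{i_t},x_{j_t})$, plus forcing the bit at that node for the perturbed minimizer $\hat c'_t$, is the same mechanism as the paper's per-node records updated along $\pi(\lca(x_{i_t},x_{j_t}),r)$ combined with its $\pm\infty$-weight perturbation and rollback. The only detail you leave implicit is that the lowest common ancestor must itself be located in constant time (via an $\scO(n)$ preprocessing of $T$, as in \cite{dt84}) so that the per-round cost is truly $\scO(\de(\lca(x_{i_t},x_{j_t})))$ rather than proportional to the depth of the two queried leaves.
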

\vspace{-0.05in}

\newcommand{\sing}{{\sc{sing}}}
\newcommand{\comp}{{\sc{comp}}}
\newcommand{\med}{{\sc{med}}}
\newcommand{\nwdpu}{{\sc{n-wdp-unif}}}
\newcommand{\nwdpe}{{\sc{n-wdp-exp}}}
\newcommand{\erm}{{\sc{erm}}}
\newcommand{\breadthfirst}{{\sc{bf}}}
\newcommand{\best}{{\sc{best}}}

\vspace{-0.12in}
\section{Preliminary experiments}\label{s:exp}
\vspace{-0.11in}
The goal of these experiments was to contrast active learning methods originating from the persistent noisy setting (specifically, \nwdp) to those originating from the non-realizable setting (specifically, \nr). The comparison is carried out on the hierarchies produced by standard HC methods operating on the first $n=10000$ datapoints in the well-known MNIST dataset from \url{http://yann.lecun.com/exdb/mnist/}, yielding a sample space of $10^8$ pairs. We used Euclidean distance combined with the single linkage (\sing), median linkage (\med), and complete linkage (\comp) functions. The $n\times n$ ground-truth matrix $\Sigma$ is provided by the 10 class labels of MNIST. 

We compared \nwdp\ with uniform prior and \nr\ to two baselines: passive learning based on empirical risk minimization (\erm), and the active learning baseline performing breadth-first search from the root (\breadthfirst, Section \ref{s:realizable}) made robust to noise as in \nwdp.
For reference, we also computed for each of the three hierarchies the performance of the best cut in hindsight (\best) on the {\em entire} matrix $\Sigma$.
That is essentially the best one can hope for in each of the three cases. 
All algorithms except \erm\ are randomized and have a single parameter to tune. We let such parameters vary across suitable ranges and, for each algorithm, picked the best performing value on a validation set of 500 labeled pairs.

In Table \ref{t:stats}, we have collected relevant statistics about the three hierarchies. In particular, the single linkage tree turned out to be very deep, while the complete linkage one is quite balanced.
We evaluated test set accuracy vs. number of queries after parameter tuning, excluding these 500 pairs. For \nwdp, once a target number of queries was reached, we computed as current output the maximum-a-posteriori cut. In order to reduce variance, we repeated each experiment 10 times.

\begin{table}
\begin{small}
\begin{center}
  \begin{tabular}{l|r|r|r|r}
    Tree      & Avg depth        &Std. dev       &\best's error      	&\best's $K$\\
\hline
    \sing     &2950              &1413.6    	 &8.26\%		&4679\\
    \med      &186.4             &41.8           &8,51\%		&1603\\
    \comp     &17.1              &3.3            &8.81\%		&557
  \end{tabular}
\end{center}
\end{small}
\vspace{-0.05in}
\caption{Statistics of the trees used in our experiments. These trees result from applying the linkage functions \sing, \comp, and \med\ to the MNIST dataset (first 10000 samples). Each tree has the same set of $n = 10000$ leaves. ``Avg depth'' is the average depth of the leaves in the tree, ``Std. dev'' is its standard deviation. For reference, we report the performance of \best\ (i.e., the minimizer of $d_H$ over all possible cuts realized by the trees), along with the associated number of clusters $K$.\label{t:stats}
}
\vspace{-0.3in}
\end{table}

%

%
The details of our empirical comparison are contained in Appendix \ref{ass:exp}. Though our experiments are quite preliminary, 
some trends can be readily spotted: 
i. \nwdp\ significantly outperforms \nr. E.g., in \comp\ at 250 queries, the test set accuracy of \nwdp\ is at 9.52\%, while \nr\ is at 10.1\%. A similar performance gap at low number of queries one can observe in \sing\ and \med.
This trend was expected: \nr\ is very conservative, as it has been designed to work under more general conditions 
than \nwdp. We conjecture that, whenever the specific task at hand allows one to make an aggressive noise-free 
algorithm (like \wdp) robust to persistent noise (like \nwdp), this outcome is quite likely to occur. 
%
%
ii. \breadthfirst\, is competitive only when \best\ has few clusters.
iii. \nwdp\ clearly outperforms \erm, while the comparison between \nr\ and \erm\ yields mixed results.

\noindent{\bf Ongoing activity. }
Beyond presenting new algorithms and analyses for pairwise similarity-based active learning, our goal was to put different approaches to active learning on the same footing for comparison on real data. Some initial trends are suggested by our experiments, but a more thorough investigation is underway. We are currently using other datasets, of different nature and size. Further HC methods are also under consideration, like those based on $k$-means.

\newpage

\appendix

\section{Missing material from Section \ref{s:intro}}\label{as:comp}

\subsection{Further related work}
Further papers related to our work are those dealing with clustering with queries, e.g., \cite{d+14,akbd16,ms17b,bb08,abv17}. In \cite{d+14} the authors show that $\scO(Kn)$ similarity queries are both necessary and sufficient to achieve exact reconstruction of an {\em arbitrary} clustering with $K$ clusters on $n$ items. This is generalized by \cite{ms17b} where persistent random noise is added. \cite{akbd16} assume the feedback is center-based with a margin condition on top. Because we are constrained to a clustering produced by cutting a given tree, the results in \cite{d+14,akbd16,ms17b} are incomparable to ours, due to the different assumptions. 
In \cite{bb08,abv17}) the authors consider clusterings realized by a given comparison class (as we do here). Yet, the queries they are allowing are different from ours, hence their results are again incomparable to ours.

\section{Missing material from Section \ref{s:realizable}}\label{as:proofs_realizable}

\subsection{One Third Splitting (\ots)}\label{as:proofs_ots}

For all $i \in V$, \ots\, maintain over time the value $|S_t(i)|$, i.e., the size of $S_t(i)$, along with the forest $F$ made up of all maximal subtrees $T'$ of $T$ such that $|V(T')|>1$ and for which none of their node labels have been revealed so far. Notice that we will not distinguish between labels $y(i)$ revealed directly by a query or indirectly by the hierarchical structure. By maximal here we mean that it is not possible to extend any such subtrees by adding a node of $V$ whose label has not already been revealed. \ots\ initializes $F$ (when no labels are revealed) to contain $T$ only, and maintains $F$ updated over time. Let subtree $T' \in F$ be arbitrarily chosen, and $\pi$ be any backbone path of $T'$. At time $t$, OTS visits $\pi$ in a bottom-up manner, and finds the {\em lowest} node $i^*_t$ in this path satisfying $\left|S_{t}^{y(i^*_t)=1}(r)\right|\le 2\left|S_{t}^{y(i^*_t)=0}(r)\right|$, i.e., $\left|S_{t}^{y(i^*_t)=1}(r)\right|\le \frac{2}{3}|S_t(r)|$, then query node $i^*_t$. We repeat the above procedure until $|S_t(r)| = 1$, i.e., until we find $c^*$. 

The next lemma is key to showing the logarithmic number of queries made by \ots.

\begin{lemma}\label{l:ots}  
With the notation introduced in Section \ref{s:realizable}, at each time $t$, the query $y(i_t^*)$ made by \ots\, splits the version space $S_t(r)$ in such a way that\footnote
{
This bound is indeed tight for this strategy when the input is a full binary tree of height $3$.
} 
\[
\min\left\{\left|S_{t}^{y(i_t^*)=0}(r)\right|, \left|S_{t}^{y(i_t^*)=1}(r)\right|\right\}\ge \frac{|S_t(r)|}{3}~.
\]
\end{lemma}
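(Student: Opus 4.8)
The plan is to first reduce the statement, which concerns the global version space $S_t(r)$, to a purely combinatorial claim about the single subtree $T'\in F$ whose backbone $\ots$ is visiting at time $t$. Write $\pi=(v_0,v_1,\dots,v_m)$ for that backbone, with $v_0$ a deepest leaf of $T'$ and $v_m=r(T')$, so that $i^*_t=v_k$ where $k$ is the smallest index with $|S_t^{y(v_k)=1}(r)|\le\frac{2}{3}|S_t(r)|$. Since no label inside $T'$ has yet been revealed and, once the already-revealed labels are accounted for, the choices of cut inside distinct members of $F$ are independent of one another, for every $v\in V(T')$ and $b\in\{0,1\}$ one has $|S_t^{y(v)=b}(r)|=N_b(v)\cdot\prod_{T''\in F\setminus\{T'\}}N(T'')$, where $N_b(v)$ is the number of cuts of $T'$ that assign label $b$ to $v$; in particular $|S_t(r)|=\prod_{T''\in F}N(T'')$. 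Cancelling the common factor, it suffices to prove $N_1(v_k)\ge\frac{1}{3}N(T')$, since then $N_0(v_k)=N(T')-N_1(v_k)\ge\frac{1}{3}N(T')$ as well, the latter because $N_1(v_k)\le\frac{2}{3}N(T')$ by the choice of $v_k$. (Such a $k$ exists and lies in $\{1,\dots,m\}$: the root $v_m$ has $N_1(v_m)=1\le\frac{2}{3}N(T')$ since $N(T')\ge2$, whereas $v_0$ is a leaf and hence $N_1(v_0)=N(T')$.)

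Next I would record a ``path decomposition'' of these counts. For $0\le i\le m-1$ let $a_i$ be the number of cuts of the off-path subtree of $T'$ hanging at $v_{i+1}$ (and $a_i=1$ if $v_{i+1}$ has no off-path child), and put $f_i(x)=1+a_ix$, an increasing affine map. Contracting the subtree rooted at $v_j$ to a single leaf puts the cuts of $T'$ in which $v_j$ is below in bijection with the cuts of the resulting tree, and unrolling the recursion $N(i)=1+N(\lef(i))\cdot N(\ri(i))$ up $\pi$ then gives $N_1(v_j)=f_{m-1}(f_{m-2}(\cdots f_j(1)\cdots))$, together with $N(T')=f_{m-1}(\cdots f_j(N(T(v_j)))\cdots)$ and $N(T(v_j))=1+a_{j-1}N(T(v_{j-1}))$ for $j\ge1$ (with $N(T(v_0))=1$). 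Writing $g:=f_{m-1}\circ\cdots\circ f_k$, which is an increasing affine map $g(x)=cx+d$ with $c=\prod_{i=k}^{m-1}a_i\ge1$ and $d=g(0)\ge0$, this reads $N_1(v_k)=g(1)=c+d$, $N_1(v_{k-1})=g(1+a_{k-1})$, and $N(T')=g(1+a_{k-1}\nu)$, where I abbreviate $\nu:=N(T(v_{k-1}))$ (so $N(T(v_k))=1+a_{k-1}\nu$).

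The core of the argument exploits that $v_{k-1}$, being strictly below $i^*_t$ on $\pi$, fails $\ots$'s splitting test, i.e.\ $N_1(v_{k-1})>\frac{2}{3}N(T')$, equivalently $c(1+a_{k-1})+d>\frac{2}{3}\bigl(c(1+a_{k-1}\nu)+d\bigr)$, which rearranges to $c+3ca_{k-1}+d>2ca_{k-1}\nu$. If $k=1$, then $\nu=1$ and, because $\pi$ is a \emph{backbone} so that $v_0$ is a deepest leaf of $T'$, the sibling of $v_0$ is itself a leaf and $a_0=1$; hence $N(T')=g(2)=2c+d\le3(c+d)=3N_1(v_1)$ for free. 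If $k\ge2$, then $v_{k-1}$ is internal, so $\nu\ge2$ and $2ca_{k-1}\nu\ge4ca_{k-1}$; plugging this into the displayed inequality yields $ca_{k-1}<c+d$, and then $2ca_{k-1}\nu<c+3ca_{k-1}+d<c+3(c+d)+d=4(c+d)$, so $N(T')=c+ca_{k-1}\nu+d<3(c+d)=3N_1(v_k)$. In either case $N_1(v_k)>\frac{1}{3}N(T')$, which is what we needed.

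The step I expect to require the most care is the factorization in the first paragraph: one has to spell out how revealed labels propagate (a ``$0$'' to all ancestors, a ``$1$'' over an entire subtree) and then check that the remaining freedom is exactly an independent choice of a cut of each $T''\in F$, so that conditioning on $y(v)$ for $v\in V(T')$ only rescales the $T'$-factor. The single point where the \emph{backbone} hypothesis is genuinely used is the case $k=1$, where it forbids a large subtree from hanging off the bottom of $\pi$ --- exactly the failure mode the surrounding text warns about when it insists on a backbone rather than an arbitrary path. Everything else (the contraction picture, the identities for $N_1(v_j)$, and the final arithmetic) is routine once this scaffolding is fixed.
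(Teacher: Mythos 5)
Your proposal is correct and is built on the same skeleton as the paper's proof: factor the global version space over the trees of $F$ to reduce to the chosen subtree $T'$, decompose the counts along the selected backbone in terms of the off-path subtree counts, and use the backbone property exactly once, to control the bottom of the path. In fact your affine compositions repackage the paper's quantities: writing $\scS_z$ for the number of consistent cuts of $T'$ crossing the backbone at level $z$ (a product of off-path counts, as in the paper), your $N_1(v_j)$ is exactly the tail sum $\sum_{z\ge j}\scS_z$. Where you genuinely differ is in how the window $[\tfrac13,\tfrac23]$ is certified: the paper argues via monotonicity of $\scS_0\ge\scS_1\ge\cdots$ together with $\scS_0=\scS_1$ (hence $\scS_0\le |S_t(r')|/2$), ruling out a jump over the window, whereas you use minimality of $k$ (the failed test at $v_{k-1}$) plus the dichotomy $k=1$ (backbone gives $a_0=1$, $\nu=1$) versus $k\ge 2$ (where $\nu\ge 2$), and close with direct algebra; this sidesteps the paper's somewhat delicate step of identifying the split at the queried node with a single term $\scS_{z^*}$, which is really a partial sum, so your finish is arguably the cleaner of the two. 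Both arguments rely on the same bookkeeping assumptions about how revealed labels shape $F$ and about leaves of $T'$ carrying trivial residual version spaces (the paper's $|S_t(j_0^{\mathrm{s}})|=1$ is your $a_0=1$), and you flag this explicitly, so nothing the paper establishes is missing from your write-up.
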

%
\begin{proof}
At each time $t$, 
$|S_t(r)|$ is the product of the cardinality of 
$S_t(\widetilde{r})$ over all roots $\widetilde{r}$ of the trees currently contained in $F$. 
Let $\pi$ be a backbone of one such tree, say tree $T'$, with root $r'$. Since $T'$ is arbitrary, in order to prove the statement, it is sufficient to show that 
\[
\min\left\{\left|S_{t}^{y(i_t^*)=0}(r')\right|, \left|S_{t}^{y(i_t^*)=1}(r')\right|\right\}\ge \frac{|S_t(r')|}{3}~.
\]
Let $h(\pi)$ be the length of $\pi$, i.e., the number of its edges, and $\langle j_0, j_1, \ldots, j_{h(\pi)}\rangle$ be
the sequence of its nodes, from bottom to top. 
For any $k<h(\pi)$, we denote by $j^{\mathrm{s}}_{k}$ the sibling of $j_k$ in $T'$ (hence, by this definition $j^{\mathrm{s}}_{k}$ does {\em not} belong to $\pi$).
Now, observe that the number of possible labelings of $\pi$ is equal to $h(\pi)+1$, that is, each labeling of $\pi$ corresponds to an integer $z\in\{0, 1, \ldots, h(\pi)\}$ such that $y(j_{k})=1$ for all $k\le z$ and $y(j_{k})=0$ for all $z<k\le h(\pi)$. Then,
given any labeling of the nodes of $\pi$ (represented by the above $z$), we have 
$$
|S_t(r')| 
= 
\begin{cases}
\prod_{k=z}^{h(\pi)-1} |S_t(j^{\mathrm{s}}_{k})|   &{\mbox{if }} z<h(\pi)~,\\ 
1                                                  &{\mbox{if }}  z=h(\pi)~.
\end{cases}
$$
In fact, the disclosure of all labels of the nodes in $\pi$ when $z<h(\pi)$ would decompose $T'$ into $(h(\pi)-z)$-many subtrees whose labelings are independent of one another. 
For all $z\in\{0, 1, \ldots, h(\pi)-1\}$, let us denote for brevity $\prod_{k=z}^{h(\pi)-1} |S_t(j^{\mathrm{s}}_{k})|$ by $\scS_z$, and also denote for convenience $|S_t^{y(r')=1}(r')|$ by $\scS_{h(\pi)}$ Notice that, by definition, $|S_t^{y(r')=1}(r')| = 1$, and corresponds to the special case $z=h(\pi)$. With this notation, it is now important to note that $i_t^*$ must be 
the parent of $j^{\mathrm{s}}_{z^*}$, for some $z^*\in\{0, 1, \ldots, h(\pi)-1\}$, and that $\left|S_{t}^{y(i_t^*)=0}(r')\right|=\scS_{z^*}$.
Thus, taking into account all possible $(h(\pi)+1)$ labelings of $\pi$, the cardinality of $S_t(r')$ can be written as follows:
\[
|S_t(r')|=\sum_{z=0}^{h(\pi)}\scS_z~.
\]

At this point, by definition, we have:
\begin{itemize}
\item[(i)]  $\scS_0=\scS_1$, as $|S_t(j^{\mathrm{s}}_0)|=1$, which in turn implies $\max_z \scS_z\le \frac{|S_t(r')|}{2}$, and 
\item[(ii)] $\scS_z\ge \scS_{z+1}$ for all $z\in\{0,\ldots,h(\pi)-1\}$~.
\end{itemize}
See Figure~\ref{f:OTSfigure} for a pictorial illustration.

\begin{figure}
\center
\includegraphics[width=10cm]{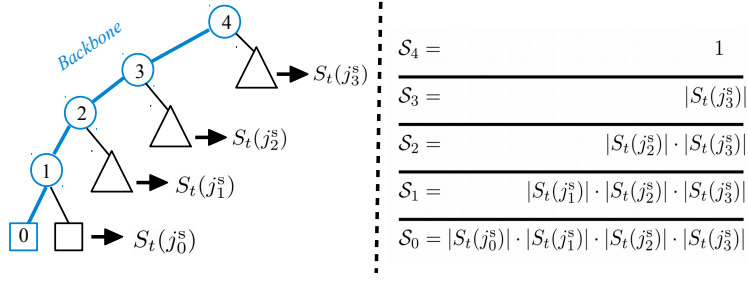}
\caption{A backbone path $\pi$ selected by \ots\ and the associated quantities needed to prove the main properties of the selected node $i^*_t$. Leaves are represented by squares, subtrees by triangles. For simplicity, in this picture $\pi$ is starting from the leftmost leaf of $T'$, but it can clearly be chosen to start from any of its deepest leaves. The sum of all terms on the right of each subtree equals $|S_t(r')|$. The cornerstone of the proof is that $i^*_t$ (and hence $z^*$) corresponds to the lowest among the $h(\pi)=4$ horizontal lines depicted in this figure for which the sum of all products below the chosen line is at least half the sum of all the products above the line. Furthermore, the fact that $|S_t(j^{\mathrm{s}}_0)|=1$ guarantees that $\scS_0=\scS_1$. Combined with the fact that $\scS_{z}\ge\scS_{z+1}$ for all $z\in \{0,\ldots, h(\pi)-1\}$, this ensures that the abovementioned horizontal line always exists, and splits the sum of all $h(\pi)+1$ terms into two parts such that the smaller one is at least $\frac{1}{3}$ of the total.}\label{f:OTSfigure}
\end{figure}

The proof is now concluded by contradiction. If our statement is false, then there must exist a value $z'$ such that $\scS_{z'}>\frac{2}{3}|S_t(r')|$ and $\scS_{z'+1}<\frac{1}{3}|S_t(r')|$. However, because the sequence $\langle \scS_0, \scS_1, \ldots \scS_{h(\pi)}\rangle$ is monotonically decreasing and we have $\scS_0\le\frac{|S_t(r')|}{2}$, implying $\scS_0\le\sum_{z=1}^{h(\pi)}\scS_z$, such value $z'$ cannot exist. Thus, it must exist $z$ such that 
\[
\frac{1}{3}|S_t(r')|\le\scS_{z}\le\frac{2}{3}|S_t(r')|~.
\]
Let $z^*$ be the smallest $z$ satisfying the above inequalities. 
Note that $i_t^*$ is the parent of $j^{\mathrm{s}}_{z^*}$, because of the bottom-up search on $\pi$ performed by \ots.
Exploiting again the monotonicity of the sequence $\langle \scS_0, \scS_1, \ldots \scS_{h(\pi)}\rangle$ and recalling that 
$\scS_{z^*}=\prod_{k={z^*}}^{h(\pi)-1} |S_t(j^{\mathrm{s}}_{k})|$,
we conclude that 
\[
\frac{1}{3}|S_t(r')|\le\left|S_{t}^{y(i_t^*)=0}(r')\right|\le\frac{2}{3}|S_t(r')|~.
\] 
Since $\left|S_{t}^{y(i_t^*)=1}(r')\right|+\left|S_{t}^{y(i_t^*)=0}(r')\right|=\left|S_{t}(r')\right|$, we must also have 
\[
\frac{1}{3}|S_t(r')|\le\left|S_{t}^{y(i_t^*)=1}(r')\right|\le\frac{2}{3}|S_t(r')|,
\] 
thereby concluding the proof.
\end{proof}

From the above proof, one can see that it is indeed necessary that $\pi$ is a backbone path, since the proof hinges on the fact that $|S_t(j^{\mathrm{s}}_0)|=1$. 
In fact, if $|S_t(j^{\mathrm{s}}_0)|$ is larger than $2\sum_{z=1}^{h(\pi)}|\scS_z|$, that is larger than $\frac{2}{3}|S_t(r')|$ (which may happen if $\pi$ is not a backbone path), we would not have $\max_z \scS_z\le \frac{|S_t(r')|}{2}$, hence $\scS_z$ would not be guaranteed to be at least $\frac{2}{3}|S_t(r')|$ for all $z$.

\noindent{\bf Proof of Theorem \ref{t:ots}}\\
\begin{proof}
By Lemma~\ref{l:ots}, we immediately see that \ots\ finds $c^*$ through $\scO(\log N)$ queries. This is because $|S_{t+1}(r)|\le\frac{2}{3}|S_{t}(r)|$ for all time steps $t$, implying by induction that the total number of queries is upper bounded by $\log_{3/2} N = \scO(\log N)$.  

We now sketch an implementation of \ots\ which requires $\scO(n+h\log N)$ time and $\scO(n)$ space.
 
In a preliminary phase, we compute in a bottom-up fashion the values $|S_0(i)|$ for all nodes $i\in V(T)$. This requires $\scO(n)$. Thereafter, we perform a breath-first search on $T$, and each time we visit a leaf of $T$, we insert a pointer to it in a an array $A$ in a sequential way. Thus, the $j$-th record of $A$ will contain a reference to the $j$-th leaf found during this visit, which entails that the leaves referred by the pointers of $A$ are sorted in ascending order of depth. 

We recall that in the noiseless setting, each time the label of a node is revealed and is equal to $1$ (to $0$), also the labels of its descendants (ancestors) are indirectly revealed, because they are known to be equal to $1$ (to $0$). The total time \ots\ takes for assigning all indirectly revealed labels is clearly $\scO(n)$. 
Each time \ots\ needs to find a backbone of a tree in the current forest $F$, we look for the largest index $j$ for which the record $A[j]$ does not point to a leaf whose parent label has not been revealed yet. Observe that, at any time $t$, the deepest leaf $\ell \in L(T)$ satisfying this property must be the terminal node of a backbone path of a tree in $F$. 
Furthermore, the highest node of such backbone is either $r(T)$ or the lowest ancestor of $\ell$ whose label has not been revealed yet, and can therefore be found in $\scO(h)$ time. 

In order to accomplish this leaf search operation, we simply maintain over time an index that scans $A$ from $A[n]$ to $A[1]$, 
looking for a leaf satisfying the above property. The total time \ots\ uses for scanning $A$ is again linear in $n$. Finally, for each query, \ots\ traverses bottom-up a backbone $\pi$, exploiting the information previously stored to find $i^*_t$, and updates it after $y(i^*_t)$ is revealed. Note that only the information of the nodes in $\pi$ has to be updated. In fact, the disclosure of the label of any node $i\in V(T)$ cannot affect the values of $S_t(j)$ for all nodes $j\in V(T)$ that are {\em not} ancestors of $i$. Besides, we are free to disregard the descendants of $i$ since they will simply be indirectly labeled (by $1$). 

Overall, the total time required by this implementation of \ots\ is the sum of $\scO(n)$ and $\scO(h)$ times the total number of queries the algorithm makes, which results in the claimed $\scO(n+h\log N)$ upper bound. The claim on the memory requirement immediately follows from the above description.
\end{proof}

\subsection{Proof of the lower bound in Theorem \ref{t:LBnoiselessExistsP}}\label{as:lower}

\begin{proof}
Let $T'$ be the subtree of $T$ constructed by visiting $T$ from its root (for instance by a breadth-first or a depth-first visit), and such that $|L(T')|= B$. Note that the construction of $T'$ satisfying this constraint is always possible because the maximum cardinality of $L(T')$ is equal to $|L_{\mathrm{s}}(T)|$ (which is also equal to $\max_{c^*} \kt$). For each leaf $\ell\in L(T')$, consider all cuts $c^*$ that can be generated by cutting either the edge connecting $\ell$ with its parent or the two edges connecting $\ell$ with its children. The total number of such cuts is $2^{|L(T')|}= 2^B$. We set the prior $\Pr(\cdot)$ to be uniform over these $2^B$-many cuts. Hence, for each leaf $\ell\in L(T')$, the probability (w.r.t. $\Pr(\cdot)$) that $c^*$ cuts the edge connecting $\ell$ with its parent is 1/2, and so is the probability that $c^*$ cuts the two edges connecting $\ell$ with its children. 

Now, observe that, by construction,
we have $B\le\kt\le 2B$ for {\em all} such cuts $c^*$ and, as a consequence, $B\le\E[\kt]\le 2B$, the expectation being over $\Pr(\cdot)$.
Since for each leaf of $T'$ any (possibly randomized) active learning algorithm $A$ has to make $\tfrac12$ mistake in expectation (over $\Pr(\cdot)$ and its internal randomization), we conclude that $B/2$ queries are always necessary to find $c^*$, 
as claimed. 
\end{proof}


\subsection{Weighted Dichotomic Path (\wdp)}\label{as:wdp}

\begin{algorithm}[!h]
    \SetKwInOut{Input}{\scriptsize{$\triangleright$ INPUT}}
    \SetKwInOut{Output}{\scriptsize{$\triangleright$ OUTPUT}}
    \Input{~$T$,~~$\{\Pr(i), i\in V\}$.}
    \Output{~$\hCC=\CC^*$.}
    {\bf Init:} 
    \begin{itemize}
    \vspace{-0.44cm}
    \item $\hCC\gets \emptyset$;~~\comm{$\hCC$ contains all the clusters of $\CC(c^*)$ found so far}
    \item $F\gets\{T\}$;~~\comm{Forest of maximal subtrees $T'$ of $T$}
    \item $y(\pa(r))\gets 0$;~~\comm{Dummy node $\pa(r)$}\\
    \item \For{$i\in V$}{\vspace{-0.45cm}\qquad\qquad\qquad\qquad $q(i)\gets \Pr(i)\cdot\prod_{j\in\pi(\pa(i),r(T))}(1-\Pr(j))$;}
    \end{itemize}
       \vspace{0.2cm}
    \comm{ --- Path with maximum entropy --- }\\
    \While{$F\neq\emptyset$}{
    Let $L(F)$ be the set of all leaves of $T$ belonging to the subtrees in $F$.\\
    Let $R(F)$ be the set of all roots of the subtrees in $F$.\\
    $\pi(\ell, r')\gets\arg\max_{\ell\in L(F), r'\in R(F)}{H(\pi(\ell, r'))}$;\\
    $\scT=\{T'\in F: \ell, r'\in V(T')\}$;\\
    \vspace{0.2cm}
    \comm{ --- Binary search on path $\pi(\ell, r')$ --- }\\
    $u\gets\ell$;~~~~$v\gets r'$;\\
    \While{$u\neq v$}{
Let $\langle i_0=u, i_1, \ldots, i_{h-1}, i_h=v \rangle$ be the sequence of nodes lying on $\pi(u, v)$ in descending order of depth.\\
	$Q\gets\sum_{k\in\{0,1,\ldots,h-1\}}q(i_k)$;\\
	$k^*=\arg\min_{k\in\{0, 1, \ldots, h-1\}}\left|\frac{Q}{2}-\sum_{j\in\{ i_0, \ldots, i_k \}} q(j) \right|$;\\
    $i^*\gets\pa(i_{k^*})$;\\
    Query $y(i^*)$;\\
    \eIf{$y(i^*)=0$}
    {
    $v\gets i_{k^*}$;
     }
    {
    $u\gets i^*$;
    }    
    }
    Set $\Pr(i) = y(i) = 1$ for all descendants $i$ of $u$;\\
    Set $\Pr(i) = y(i) = 0$ for all ancestors $i\neq u$ of $u$;\\
    $q(u)\gets 1$;~~~~~~Set $q(i) = 0$ for all descendants and ancestors $i\neq u$ of $u$;\\
    Update $\Pr(i)$ and $q(i)$ for all descendants of all $i\in V(\pi(\pa(u),r'))$ such that $i\neq r'$;\\
    $\hCC\gets \hCC\cup \{L(u)\}$;~~\comm{$L(u)$ is a cluster of $\CC^*$}\\
    \vspace{0.2cm}
    \comm{ --- Update F --- }\\
    $F\gets F\setminus\scT$;~~\comm{Remove from F the subtree containing $\pi(\ell, r')$}\\
    $j\gets\pa(u)$; ~\comm{Lowest node in $\pi(\ell, \pa(r'))$ with label known to be $0$}\\
    \While{$j\in V(\pi(\ell,r'))$}{
    Let $j_c$ be the child of $j$ that is not in $\pi(\ell,r')$.\\
    \eIf{$j_c\in L$}
    {$\hCC\gets \hCC\cup \{j_c\}$;~~\comm{Add to $\hCC$ a singleton cluster}}{$F\gets F\cup \{T(j_c)\}$;~~\comm{$j_c$ is the root of a subtree that will be processed later}\\
    Update $q(i)$ for all $i\in V(T(j_c))$;\\}
    $j\gets \pa(j)$;~~\comm{j is a node whose label is known to be $0$}\\
    }
    }
	\Return{$\hCC$~.}
    \caption{WDP (Weighted Dichotomic Path) \label{a:wdp}}	 
\end{algorithm}

In Algorithm \ref{a:wdp} we give the pseudocode of \wdp.
At each round, \wdp\ finds the path 
whose entropy is maximized over all bottom-up paths $\pi(\ell, r')$, with $\ell \in L$ and $r' = r(T')$, where $T'$ is the subtree in $F$ containing $\ell$. Ties are broken arbitrarily.
\wdp\ performs a binary search on such $\pi(\ell,r')$ to find the edge of $T'$ which is cut by $c^*$, 
taking into account the current values of $q(i)$ over that path.
Specifically, let $\langle i_0=\ell, i_1, \ldots, i_{h-1}, i_h=r' \rangle$ be the sequence of nodes in $\pi(\ell, r')$ in descending order of depth. 
\wdp~finds an index $k^*$ that corresponds to the middle point in $\pi(\ell, r')$, taking into account the current values of $q(i)$ over that path.
%
Let $i^* = \pa(i_{k^*})$. \wdp\ queries the label of $i^*$: If $y(i^*)=0$, \wdp\ continues the binary search on $\pi\left(\ell, i_{k^*}\right)$;
%
if instead $y(i^*)=1$, the binary search continues on $\pi\left(i^*, r'\right)$, and so on. 
During the binary search, whenever \wdp\ finds a node $u\in V(\pi(\ell, r'))$ with queried labels $y(u) = 0$ and $y(\pa(u)) = 1$,
then the edge of $\pi(\ell, r')$ cut by $c^*$  has been found, and the binary search on this backbone path terminates. In the special case where $y(r') = 1$, the binary search also ends, and we know that all nodes in $L(r')$ form a cluster of $\CC(c^*)$. 
Once a binary search terminates, \wdp\ updates $F$ and the probabilities $\Pr(i)$ at all nodes $i$ in the subtrees of $F$, so as to reflect the new knowledge gathered by the queried labels. 

Below, we prove \wdp's query complexity.
%
The proofs are split into a series of lemmas.
\begin{lemma}\label{l:WDPentropyDifference}
Given tree $T$ with set of leaves $L$, any prior $\Pr(\cdot)\in\scP_{>0}$ over $c^*$, and any $c^* \sim \Pr(\cdot)$, let $j_0$ be a node of $\AB(c^*)$, having as children a leaf $\ell\in L$ and an internal node $j'$ of $T$ (see Figure~\ref{f:WDPproofs}, left). Then, during its execution, \wdp\ will never select the bottom-up path starting from $\ell$. 
\end{lemma}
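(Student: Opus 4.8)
The lemma says: if $j_0 \in \AB(c^*)$ has one child that is a leaf $\ell$ and one child $j'$ that is an internal node, then \wdp\ never selects the bottom-up path starting at $\ell$ (i.e., the path $\pi(\ell, r')$ for whatever subtree root $r'$ currently applies). The intuition is that the path starting at $\ell$ and the path starting at the deepest leaf of $T(j')$ share all their nodes from $j_0$ upward, but the latter path has strictly more nodes (it goes through $j'$ and below), and the extra $q$-mass on those lower nodes makes its entropy strictly larger. Since \wdp\ always picks the maximum-entropy backbone path, the path from $\ell$ is never chosen.

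**Plan.** First I would fix notation: let $r'$ be the root of the subtree $T' \in F$ currently containing $\ell$; note $j_0, j', \ell \in V(T')$ since no label on this path has been revealed (the whole point is $j_0 \in \AB(c^*)$ and we're before the relevant search). Let $\ell'$ be any deepest leaf of the subtree $T(j')$, so $\pi(\ell', r')$ is a genuine backbone-type bottom-up path competing with $\pi(\ell, r')$. The key structural observation: $\pi(\ell', r')$ passes through $j'$ and $j_0$ and then coincides with $\pi(\ell, r')$ from $j_0$ to $r'$, whereas $\pi(\ell, r')$ consists only of $\ell, j_0$, and then the same shared tail. So $\pi(\ell, r') \subsetneq \pi(\ell', r')$ as node sets — strictly, because $j' \in \pi(\ell',r') \setminus \pi(\ell,r')$.

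**Main computation.** The entropy is $H(\pi(v, r')) = -\sum_{i \in \pi(v,r')} q(i) \log_2 q(i)$, and by the invariant maintained on each subtree in $F$ we have $\sum_{i \in \pi(v,r')} q(i) = 1$ for every leaf $v$ of that subtree. So both $H(\pi(\ell,r'))$ and $H(\pi(\ell',r'))$ are entropies of probability distributions supported on the respective node sets. Write $\pi(\ell,r') = \{\ell\} \cup P$ where $P = \pi(j_0, r')$, and $\pi(\ell',r') = \{\ell', \ldots, j'\} \cup \{j_0\} \cup \pi(\pa(j_0), r')$; more cleanly, $\pi(\ell',r') \supseteq \pi(\ell,r') \setminus \{\ell\}$ plus the strictly nonempty extra set of nodes strictly between $\ell'$ and $j_0$ (including $j'$). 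Here I need that $q(\ell) = \Pr(\ell) \cdot \prod_{j \in \pi(\pa(\ell), r')}(1-\Pr(j)) = \prod_{j \in \pi(j_0,r')}(1-\Pr(j))$ since $\Pr(\ell) = 1$ for leaves and $\pa(\ell) = j_0$. Meanwhile the $q$-values on the nodes of $\pi(\ell', j')$ together with $q(\ell)$'s "slot" get redistributed. The cleanest argument: since $\sum q = 1$ on both paths and the node set strictly grows, and since each new node carries strictly positive $q$-mass (as $\Pr(\cdot) \in \scP_{>0}$ forces every $q(i) > 0$ for nodes on live paths), I can invoke a \emph{refinement} argument — refining the "block" corresponding to $\ell$ into several finer blocks (the nodes on $\pi(\ell',j')$) with the same total mass. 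Entropy strictly increases under nontrivial refinement of a probability distribution, which gives $H(\pi(\ell',r')) > H(\pi(\ell,r'))$. Hence \wdp's argmax never returns $\pi(\ell,r')$.

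**Anticipated obstacle.** The one delicate point is making the "refinement" statement precise in \wdp's dynamic setting: I must check that at the moment \wdp\ would consider $\pi(\ell,r')$, the $q$-values on $\pi(\ell',r')$ really are a refinement of those on $\pi(\ell,r')$ — i.e. that $q(\ell)$ equals the sum of the $q$-values on the nodes $\pi(\ell', j_0) \setminus \{j_0\}$, using a subtree-level telescoping identity $\sum_{i \in \pi(v, j_0) \setminus \{j_0\}} q(i) = \prod_{j \in \pi(j_0, r')}(1 - \Pr(j))$ for any leaf $v$ of $T(j')$ (the analogue of the global "$\sum q = 1$" invariant applied to the subtree $T(j')$, then scaled by the product of $(1-\Pr(j))$ along $\pi(j_0,r')$). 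Once that identity is in hand — it follows from the same induction that proves the $\sum q = 1$ invariant — the rest is the standard fact that $-\sum p_i \log p_i$ strictly increases when a block of mass is split into two or more pieces of positive mass, which I would state as a one-line sublemma (concavity / grouping property of Shannon entropy). I'd also remark that this covers all subtrees $T'$ that $\ell$ could ever belong to, since the argument only used local structure at $j_0$ and the invariant, which hold for whatever $T' \in F$ contains $\ell$; and that $j_0$ staying in $\AB(c^*)$ guarantees no label on this path is revealed before \wdp\ would have to choose, so $\ell$ and $\ell'$ remain live leaves of a common subtree.
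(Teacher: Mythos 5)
Your proposal is correct and follows essentially the same route as the paper's proof: you establish the mass identity $q(\ell)=\sum_{v\in\pi(\ell',j')}q(v)$ (the paper gets it by equating the path sums, you by telescoping, which is the same computation) and then invoke strict concavity of $-x\log_2 x$ --- your ``entropy increases under nontrivial refinement'' is exactly the paper's inequality $f(x)<\sum_i f(x_i)$, with $\scP_{>0}$ guaranteeing all masses are strictly positive and $|V(\pi(\ell',j'))|\ge 2$ making the refinement nontrivial. Your closing remark on liveness (whenever $\ell$ is still a leaf of some $T'\in F$, node $j_0$ is unrevealed and hence a leaf of $T(j')$ sits in the same subtree, so the dominating competitor is always available) is the same observation the paper makes when it argues that the selected competing path indirectly discloses the edge $(\ell,j_0)$ and removes $\ell$ from $F$.
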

\begin{proof}
%
%
Let $T_0$ be the tree made up of all nodes of $\AB(c^*)$, and consider any given round with $q(i)$ in (\ref{e:qi}) defined by the current {\em posterior} distribution maintained by the algorithm. We first show that, for all ancestors $a$ of $j_0$, path $\pi(\ell, a)$ cannot be selected by \wdp, because its entropy\footnote
{
Here, we are defining the entropy of a path $\pi$ as $-\sum_{v \in V(\pi)} q(v)\log_2 q(v)$, even for paths $\pi$ for which $\sum_{v \in V(\pi)} q(v) < 1$.
} 
$H(\pi(\ell, a))$ will always be strictly smaller than $H(\pi(\ell', a))$ for {\em all} leaves $\ell' \in L(j')$. To this effect, we can write
\begin{align}
H(\pi(\ell,a))-H(\pi(\ell',a))&=\left(-q(\ell)\log_2 q(\ell) - \sum_{u\in \pi(j_0, a)} q(u)\log_2 q(u)\right)\nonumber\\
&\qquad\ \ -\Biggl(H(\pi(\ell',j') - \sum_{u\in \pi(j_0, a)}q(u)\log_2 q(u)\Biggr)\nonumber\\
&=-q(\ell)\log_2 q(\ell)+\sum_{v\in\pi(\ell',j')}q(v)\log_2 q(v)~.\label{e:difference}
\end{align}
Now, since
\[
\sum_{u\in\pi(\ell',j')}q(u) + \sum_{v\in\pi(j_0,a)}q(v) 
= 
\sum_{u\in\pi(\ell',a)}q(u)
= \sum_{u\in\pi(\ell,a)}q(u) = q(\ell)+\sum_{u\in\pi(j_0,a)}q(u)~,
\]
we have $q(\ell)=\sum_{v\in\pi(\ell',j')}q(v)$. 

Consider the function $f(x)=-x\log_2 x$, for $x\in [0,1]$. For all $x,x_1,x_2\in(0,1)$ such that $x_1+x_2=x$, it is easy to verify that we have $f(x)<f(x_1)+f(x_2)$. More generally, for all $x, x_1, x_2, \ldots, x_m\in(0,1)$ with $\sum_{i=1}^m x_i = x$, one can show that $f(x)<\sum_{i=1}^m f(x_i)$. 
Since $|V(\pi(\ell',j'))|\ge 2$ (holding because $j'\not\in L$ implies $\ell'\neq j'$), the above inequality on $f(\cdot)$ allows us to write
\[
-q(\ell)\log_2 q(\ell)<-\sum_{v\in\pi(\ell',j')}q(v)\log_2 q(v)~,
\]
i.e., (\ref{e:difference})  < 0. Notice that the assumption $\Pr(\cdot)\in\scP_{>0}$ implies $q(v)>0$ at any stage of the execution of \wdp\ where node $v$ has an unrevealed label. This is because, after any binary search on a path selected by \wdp, for all $v$ belonging to any tree in $F$, in the update phase each value $q(v)$ is multiplied by a strictly positive value. This ensures that we can use the above inequality about $f(\cdot)$, as its argument will always lie in the open interval $(0,1)$.


The inequality in (\ref{e:difference}) implies that there always exists a leaf $\ell'$ of $T(j')$ such that \wdp\ selects the path connecting $\ell'$ with the root of the tree containing $\ell$ in the current forest $F$. This selection entails the disclosure of either cut edge $(j',j_0)$ (if $j_0\in L(T_0)$), or a cut edge in $T(j')$ (if $j_0\not\in L(T_0)$), which in turn implies that the labels of $i_0$ and all its ancestors will be disclosed to the algorithm to be equal to $0$, thereby indirectly revealing also cut edge $(\ell, i_0)$. Since $F$ contains only trees whose height is larger than $1$, after this cut edge disclosure the tree made up of leaf $\ell$ alone cannot be part of $F$, thus preventing \wdp's selection of a path starting from $\ell$.
\end{proof}

\begin{figure}\
\center
\includegraphics[width=9cm]{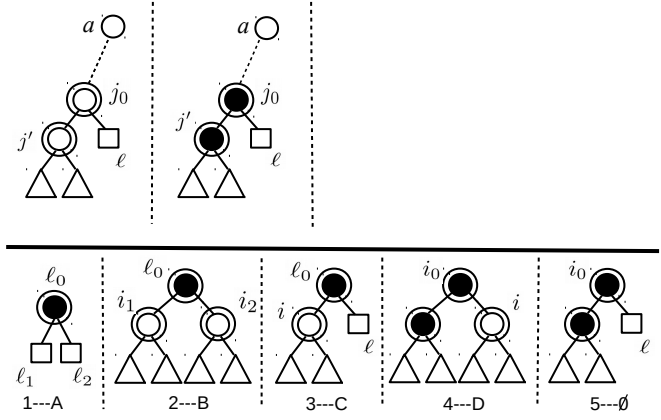}
\caption{Illustration of all possible cases of Lemma~\ref{l:WDPentropyDifference} and Lemma~\ref{l:WDPnumberPaths}. Nodes belonging to $T_0$ (see main text) are black, all remaining nodes are white. Leaves and subtrees of $T$ are represented by squares and triangles, respectively. Each node of $T'_{c^*}$ is enclosed in a circle. {\bf Above:} The two possible cases illustrating Lemma~\ref{l:WDPentropyDifference}, that is, $j'\not\in V(T_0)$ on the left, and $j'\in V(T_0)$ on the right. {\bf Below:} The five cases described in Lemma~\ref{l:WDPnumberPaths}.}\label{f:WDPproofs}
\end{figure}

\begin{lemma}\label{l:WDPnumberPaths}
For any input tree $T$ and any cut $c^*$ with $\Pr(\cdot)\in\scP_{>0}$, the number of paths selected by \wdp\ before stopping is $\kt$.
\end{lemma}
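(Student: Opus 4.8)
The plan is to show that the bottom-up paths successively selected by \wdp, each intersected with $V(T'_{c^*})$, constitute a partition of $V(T'_{c^*})$ into bottom-up paths whose lowest endpoints are exactly the leaves of $T'_{c^*}$. Writing $\tau := T'_{c^*}$ (whose nodes are precisely the internal nodes of $T$ lying in $\AB(c^*)\cup\LB(c^*)$) and $\tau(v)$ for the subtree of $\tau$ rooted at a node $v$, this suffices: in any such partition every leaf of $\tau$, having no proper descendant inside $\tau$, must be the lowest node of the unique block containing it, so the blocks are in bijection with $L(\tau)$, whence \wdp\ performs exactly $|L(\tau)|=\kt$ binary searches. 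Two easy structural facts about $\tau$ will be used: first, $\tau$ is connected and contains $r(T)$, since the parent of any node of $\AB(c^*)\cup\LB(c^*)$ lies in $\AB(c^*)$ and is internal; second, a node $v\in\tau$ is a leaf of $\tau$ if and only if either $v\in\LB(c^*)$, or $v\in\AB(c^*)$ with both children of $v$ in $T$ being leaves of $T$ (because every internal child of an $\AB(c^*)$-node is again in $\tau$).

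The core step is to analyze a single binary search. Suppose \wdp\ selects the path $\pi(\ell,r')$, where $r'=r(T')$ is the root of the subtree $T'\in F$ containing $\ell$, and recall $\ell\in L(T)$. Along $\pi(\ell,r')$ the labels read $1\cdots1\,0\cdots0$ from $\ell$ upward (possibly all $1$'s), so the binary search correctly determines which edge of $\pi(\ell,r')$ (if any) is cut and terminates; the $q(\cdot)$-weighting only changes which node is probed. Let $u$ be the highest node of $\pi(\ell,r')$ with $y(u)=1$. I will check that the set of $\tau$-nodes whose labels become known during this binary search is exactly a bottom-up path $P$ of $\tau$ running from a leaf of $\tau$ up to $r'$, in three cases: (a) $y(r')=1$, so $u=r'$, $P=\{r'\}$, and $r'\in\LB(c^*)$ is a leaf of $\tau$; (b) $y(r')=0$ and $u$ internal, so $u\in\LB(c^*)$ is a leaf of $\tau$ and $P=\pi(u,r')$, all of whose intermediate nodes lie in $\AB(c^*)$ and are internal, hence in $\tau$; (c) $y(r')=0$ and $u=\ell$, so the cut edge is $(\pa(\ell),\ell)$ and $\pa(\ell)\in\AB(c^*)$ --- here Lemma~\ref{l:WDPentropyDifference} forces the sibling of $\ell$ to be a leaf of $T$ (otherwise \wdp\ would never select a path starting at $\ell$), hence $\pa(\ell)$ is a leaf of $\tau$ by the second structural fact and $P=\pi(\pa(\ell),r')$. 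In every case the lowest node of $P$ is a leaf of $\tau$. Moreover a direct inspection of the update of $F$ shows that $\tau(r')\setminus P$ is the disjoint union of the subtrees $\tau(j_c)$ over the internal off-path children $j_c$ of the $0$-labelled nodes of $P$, and these are precisely the subtree roots \wdp\ re-inserts into $F$.

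I will then run an induction over the rounds establishing the invariant: at every stage $F=\{\,T(v):v\text{ is a root of a connected component of }\tau\setminus R\,\}$, where $R\subseteq V(\tau)$ is the set of $\tau$-nodes whose labels are already known. Initially $R=\emptyset$ and $F=\{T\}=\{T(r(T))\}$ by the first structural fact; the single-search analysis shows that processing $T(r')$ deletes the component $\tau(r')$ and inserts the components of $\tau(r')\setminus P$, which is exactly the update made to $F$. Since a revealed label never recurs in any subtree of $F$, the successive paths $P$ are pairwise disjoint, and \wdp\ stops precisely when $F=\emptyset$, i.e.\ when $R=V(\tau)$. Thus $V(\tau)$ is partitioned into the selected paths restricted to $\tau$, each with lowest node a leaf of $\tau$, and the bijection argument of the first paragraph gives that the number of selected paths equals $|L(\tau)|=\kt$.

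The step I expect to be the main obstacle is case (c). Without it one might worry that a binary search ``bottoms out'' at the parent of a singleton leaf whose sibling is still an internal, unexplored subtree; that would make the lowest node of a selected path a non-leaf of $\tau$ and inflate the count beyond $|L(\tau)|$. Precisely this is excluded by Lemma~\ref{l:WDPentropyDifference}, so the delicate point is to apply that lemma in the right generality --- at an arbitrary round, with respect to the current forest $F$ and the current posterior $q(\cdot)$ --- and to verify that it accounts for every way in which the lowest endpoint of a \wdp-selected path could fail to be a leaf of $\tau$.
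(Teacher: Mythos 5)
Your argument is correct, and it reaches the count $\kt$ by a genuinely different bookkeeping than the paper's. The paper proves the lemma by a local case analysis: it partitions the edges of $c^*$ into five classes $S_1,\ldots,S_5$ according to their position relative to the tree $T_0$ of $\AB(c^*)$-nodes, partitions the leaves of $T'_{c^*}$ into four classes $A,B,C,D$, and exhibits a correspondence in which each leaf of $T'_{c^*}$ absorbs exactly one binary search, with Lemma~\ref{l:WDPentropyDifference} invoked to show that the ``dangerous'' edges (a cut edge to a leaf of $T$ whose sibling is an internal node, the $S_3$/$S_5$ configurations) are only revealed indirectly and never consume a selected path. You instead prove a global invariant on the forest $F$ (its members are exactly the subtrees $T(v)$ rooted at the components of the unrevealed part of $T'_{c^*}$) and show that each binary search reveals, within $T'_{c^*}$, precisely one bottom-up segment whose lowest node is a leaf of $T'_{c^*}$; the selected paths then induce a path partition of $V(T'_{c^*})$ whose blocks are in bijection with $L(T'_{c^*})$. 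The crucial use of Lemma~\ref{l:WDPentropyDifference} is the same in both proofs and you place it exactly where it is needed: in your case (c), to force the sibling of the starting leaf to be a leaf of $T$, so that the segment bottoms out at a leaf of $T'_{c^*}$ rather than at an internal node of it (this is precisely the scenario the paper's $S_5\leftrightarrow\emptyset$ case excludes). What each approach buys: the paper's enumeration makes explicit which cut edges are discovered directly versus indirectly, whereas your invariant argument is more uniform, avoids the edge-type taxonomy, and yields the slightly stronger structural fact that the selected paths restricted to $T'_{c^*}$ partition its vertex set -- a fact that also meshes naturally with the per-path posterior bookkeeping used in the proofs of Theorems~\ref{t:wdp1} and \ref{t:wdp2}. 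The inductive verification of your forest invariant (upward-closedness of the revealed set within $T'_{c^*}$, and the identification of the re-inserted subtree roots with the off-path internal children of the $0$-labelled segment nodes) is routine and goes through as you sketch it.
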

\begin{proof}
If $c^*$ has only one cluster the statement is clearly true, since the binary search performed by \wdp\ on the first selected path reveals that $y(r)=1$ (hence $y(v) = 1$ for all $v\in V$). We then continute by assuming $y(r) = 0$, so that $c^*$ has least two clusters. 

Let $\Pi$ be the set of all paths selected by \wdp\, during the course of its execution. The binary search perfomed by \wdp\ on {\em each} of such paths, discloses exactly {\em one} edge of $c^*$. Let $c^*_{\wdp}$ be the set containing all these cut edges, and $c^*_0$ be the set of the remaining cut edges of $c^*$.
We show that, for any $T$ and any cut $c^*$ of $T$, $|c^*_{\wdp}|=\kt$, while all edges in $c^*_0$ are {\em indirectly} disclosed by \wdp, although none of them belongs to paths in $\Pi$. 

Let $T_0$ be the subtree of $T$ made up of all nodes in $\AB(c^*)$. The edges of $c^*$ can be partitioned into the five disjoint sets $S_1, \ldots, S_5$ (see Figure~\ref{f:WDPproofs} for reference):
\begin{itemize}
\item [$S_1$:] The set of all {\em pairs} of edges connecting a leaf $\ell_0$ of $T_0$ to two sibling leaves $\ell_1$ and $\ell_2$ of $T$ (Figure \ref{f:WDPproofs}, below, 1); 
\item [$S_2$:] The set of all {\em pairs} of edges connecting a leaf $\ell_0$ of $T_0$ to two sibling internal nodes $i_1$ and $i_2$ of $T$ (Figure \ref{f:WDPproofs}, below, 2);  
\item [$S_3$:] The set of all {\em pairs} of edges connecting a leaf $\ell_0$ of $T_0$ to a leaf $\ell\in L$ and an internal node $i$ of $T$ (Figure \ref{f:WDPproofs}, below, 3); 
\item [$S_4$:] The set of all edges connecting an internal node $i_0$ of $T_0$ to an internal node $i$ of $T$, so that the sibling node of $i$ belongs to $V(T_0)$ (Figure \ref{f:WDPproofs}, below, 4); 
\item [$S_5$:] The set of all edges connecting an internal node $i_0$ of $T_0$ to a leaf $\ell$ of $T$, so that the sibling node of $\ell$ belongs to $V(T_0)$ (Figure \ref{f:WDPproofs}, below, 5).
\end{itemize}
Recall that $T'_{c^*}$ is the subtree of $T$ whose nodes are $(\AB(c^*) \cup \LB(c^*)) \setminus L$, and that $\kt$ is the number of its leaves. The leaves of $T'_{c^*}$ can be partitioned into the following four sets $A$, $B$, $C$, and $D$ (see again Figure~\ref{f:WDPproofs} for reference): 
\begin{itemize}
\item [$A$:] The set of all leaves of $T'_{c^*}$ that are also leaves of $T_0$, i.e., that belong to $\AB(c^*)$;
\item [$B$:] The set of all {\em sibling} leaves of $T'_{c^*}$ that are also (sibling) internal nodes of $T$;
\item [$C$:] The set of all leaves of $T'_{c^*}$ that are also internal nodes of $T$ such that their sibling node is a leaf of $T$;
\item [$D$:] The set of all leaves of $T'_{c^*}$ that are also internal nodes of $T$ such that their sibling node belongs to $T_0$.
\end{itemize}
We will not show a one-to-one mapping between $L(T'_{c^*})$ and the cut edges of $c^*_{\wdp}$ covering all possible cases.
\begin{itemize}
\item [$S_1 \leftrightarrow A$:] For each pairs of cut edges in $S_1$), \wdp\ clearly selects a path starting from either $\ell_1$ or $\ell_2$, which will indirectly disclose the cut edge incident to the sibling leaf ($\ell_2$ or $\ell_1$, respectively). $S_1$ is therefore about all leaves of set $A$.
\item [$S_2 \leftrightarrow B$:] For each pairs of cut edges in $S_2$, \wdp\ selects two paths, one per cut edge. Each of these two paths clearly contains one of these two cut edges, and corresponds to all leaves of $T'_{c^*}$ that are also leaves of $T$. Hence we are covering all leaves of set $B$.
\item [$S_3 \leftrightarrow C$:] For the edges in $S_3$, \wdp\ selects only one path, starting from a leaf of $T(i)$. This path clearly contains edge $(i,\ell_0)$, and covers all leaves of set $C$. Observe that, by Lemma~\ref{l:WDPentropyDifference}, edge $(\ell_0, \ell)$ is always indirectly revealed and never contained in a path selected by \wdp.
\item [$S_4 \leftrightarrow D$:] For the edges in $S_4$, whenever \wdp\ selects a path starting from a leaf of $T(i)$, all the nodes in $V(T(i))$ are indirectly labeled $1$, and from that point on, they will not be included in a tree in $F$. This path clearly contains edge $(i_0, i)$, hence we are covering all leaves of set $D$. 
\item [$S_5 \leftrightarrow\emptyset$:] Finally, Lemma~\ref{l:WDPentropyDifference} ensures that all cut edges in $S_5$ are indirectly disclosed whenever \wdp\ selects a path starting from a leaf belonging to $T(i'_0)$, where $i'_0$ is the sibling node of $\ell$. Hence this case is ruled out by Lemma~\ref{l:WDPentropyDifference}, and does not correspond to any leaf.
\end{itemize}
From the above, we conclude that the number of paths selected by $\wdp$ is always equal to $\kt$, as claimed.
\end{proof}

The next lemma provides an entropic bound on the (condionally) expected number of queries \wdp\, makes on a given path. Notice that the posterior distribution maintained by \wdp\, never changes {\em during} each binary search, but only between a binary search and the next. Consider then $q(i)$ defined in (\ref{e:qi}) at the beginning of a given binary search in terms of the current posterior distribution, and let $\pi$ be the selected path, after having observed the labels that generated the current posterior.
%
%
\begin{lemma}\label{l:piQueries}
Let $\pi$ be any path selected by \wdp\ during the course of its execution, and $\{q(i)\}$ be the current distribution (\ref{e:qi}) at the time $\pi$ is selected. Then the expected number of queries \wdp\ makes on $\pi$, conditioned on past revealed labels, is $\scO\left(\left\lceil H(\pi)\right\rceil\right)=\scO\left(\log (|V(\pi)|)\right)$~. Here, both the conditional expectation and $H(\pi)$ are defined i.t.o. $\{q(i)\}$.
\end{lemma}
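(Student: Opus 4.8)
The plan is to reduce the binary search performed by \wdp\ on the selected path $\pi$ to a weighted binary search over a probability distribution supported on the $|V(\pi)|$ possible cut positions, and then invoke a standard entropy bound for such searches. Fix the path $\pi = \pi(\ell, r')$ at the moment it is selected, and recall that the invariant $\sum_{i \in \pi(\ell, r(T'))} q(i) = 1$ is maintained on every subtree $T'$ of the forest $F$ (as stated after equation (\ref{e:qi})); in particular $\sum_{i \in \pi} q(i) = 1$ at this moment, so $\{q(i)\}_{i \in \pi}$ is a genuine probability distribution over the nodes of $\pi$. By definition $q(i) = \Pr(y(i) = 1 \wedge y(\pa(i)) = 0)$ w.r.t. the current posterior, so $q(i)$ is exactly the conditional probability (given the past labels) that the cut edge of $c^*$ lying on $\pi$ is the edge $(\pa(i), i)$; since $c^*$ cuts $T'$ on exactly one edge of this backbone, these probabilities sum to one, consistent with the invariant. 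Hence the quantity the binary search is trying to locate is a random node $i \in \pi$ distributed according to $\{q(i)\}$.

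Next I would argue that one step of \wdp's inner loop — querying $y(i^*)$ where $i^* = \pa(i_{k^*})$ and $k^*$ is the index minimizing $|Q/2 - \sum_{j \le k} q(j)|$ — is precisely the ``split the mass as evenly as possible'' rule for this distribution, restricted to the current sub-path $\pi(u,v)$. Observing $y(i^*)$ eliminates either the nodes at or below $i_{k^*}$ (if $y(i^*) = 1$, so $u \gets i^*$) or those strictly above (if $y(i^*) = 0$, so $v \gets i_{k^*}$), and the retained piece is, by the choice of $k^*$, a contiguous sub-path carrying at most a $(1 - \text{something})$ fraction of the current mass. The cleanest way to get the entropy bound is the classical one: because the split point is chosen to balance the prefix sums as closely as possible, after each query the ``hardest'' remaining node — i.e.\ the one with probability $p$ among the surviving candidates — has had the surviving total mass cut roughly in half unless $p$ itself is large, and the standard analysis (as in Dasgupta's and related treatments of generalized/weighted binary search cited in the paper, \cite{da05,no11,ml18}) shows that such a balanced-splitting search on a distribution $\{q(i)\}$ terminates, in expectation over the hidden node, within $\scO(H(\{q(i)\}) + 1) = \scO(\lceil H(\pi) \rceil)$ queries. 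I would spell this out via a potential/amortized argument: charge each query to a $\log$-factor decrease in the mass of the interval containing the true cut, and bound $\E[\text{number of queries locating node } i] \le \log_2(1/q(i)) + \scO(1)$, so that the conditional expectation is $\sum_i q(i)(\log_2(1/q(i)) + \scO(1)) = H(\pi) + \scO(1)$.

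Finally, for the second equality $\scO(\lceil H(\pi)\rceil) = \scO(\log |V(\pi)|)$, note that $\{q(i)\}_{i \in \pi}$ is a distribution on $|V(\pi)|$ points, so $H(\pi) \le \log_2 |V(\pi)|$ by the maximum-entropy property of the uniform distribution; combined with the fact that a binary search on a path of $|V(\pi)|$ nodes obviously needs at most $\lceil \log_2 |V(\pi)| \rceil$ queries in the worst case, both upper bounds are $\scO(\log|V(\pi)|)$, giving the claim. The main obstacle I anticipate is making the ``balanced split $\Rightarrow$ entropic query bound'' step fully rigorous in this setting: unlike a clean binary search on a line, here the available split positions are the actual internal nodes of $\pi$ with their $q$-weights, so when a single node $i$ carries more than half the remaining mass a query need not halve the interval; the argument must handle that case by noting that either the true cut is that heavy node (cheap to confirm, contributing only $\log(1/q(i))$) or the query removes it, and in both cases progress is made. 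Handling this discretization carefully — rather than hand-waving to the continuous ideal — is where the real work lies, but it is exactly the content of the cited generalized-binary-search results, which I would invoke rather than reprove.
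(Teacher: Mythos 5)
Your proposal is correct and takes essentially the same route as the paper: the paper likewise treats $\{q(i)\}$ restricted to $\pi$ as the conditional distribution of the cut edge, asserts that the weighted bisection locates a cut edge of posterior mass $q$ within $\lceil\log_2(1/q)\rceil$ queries, and then sums $\sum_i q(i)\lceil\log_2(1/q(i))\rceil$ to obtain $\scO\left(\lceil H(\pi)\rceil\right)=\scO\left(\log|V(\pi)|\right)$. The discretization subtlety you flag (a single node carrying more than half the remaining mass) is left implicit in the paper as well, which states the per-outcome bound without the amortized argument you sketch.
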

\begin{proof}
Let $\pi$ be the currently selected path,
and denote by $E_{\pa}(\pi)$ the set made up of the edges in $\pi$ along with the edge connecting the top node of $\pi$ to its parent (recall that in the special case where $r$ is a terminal node of $\pi$, we can view $r$ as the child of a dummy \enquote{super-root}). 
The binary search performed on $\pi$ guarantees that the number of queries $Q(\pi,(u,v))$ made by \wdp\ to find a cut edge $(u,v)$ lying on $\pi$ can be quantified as follows:
\[
\left\lceil\log_2\left(\frac{\sum_{(u',v')\in E_{\pa}(\pi)}\Pr((u',v')\in c^*)}{\Pr((u,v)\in c^*)}\right)\right\rceil=
\left\lceil\log_2\left(\frac{1}{\Pr((u,v)\in c^*)}\right)\right\rceil~,
\]
where the probabilities above are defined w.r.t. the posterior distribution at the beginning of the binary search.
The expected number of queries made on $\pi$, conditioned on past labels can thus be bounded as
\begin{align*}
\sum_{(u',v')\in E_{\pa}(\pi)} \Pr((u',v')\in c^*)\left\lceil\log_2\left(\frac{1}{\Pr((u',v')\in c^*)}\right)\right\rceil
&=\sum_{u\in V(\pi)} q(i)\left\lceil\log_2\left(\frac{1}{q(i)}\right)\right\rceil\\
&=\scO\left( \left\lceil H(\pi)\right\rceil\right)\\
&=\scO\left(\log(|V(\pi)|)\right)~,
\end{align*}
as claimed.
\end{proof}

We are now ready to prove Theorem \ref{t:wdp1} and Theorem \ref{t:wdp2}.

\begin{proof}[Theorem \ref{t:wdp1}]
For given $c^*$, let $\Pi = \Pi(c^*) = \langle \pi_1, \dots, \pi_{|\Pi|}\rangle$ be the sequence of paths selected by \wdp, sorted in the temporal order of selection during \wdp's run. 
Also, denote by $Q(\pi_j)$ the number of queries made by \wdp\, on $\pi_j \in \Pi$. Notice that the sequence $\Pi$ is fully determined by $c^*$. 
Moreover, the paths in $\Pi$ are orderer in such a way to guarantee that $\pi_j$ contains a unique edge $(\pa(u_j),u_j)$ which $c^*$ cuts across. Then, if we denote by $\{q_j(\cdot)\}$ the value of $q(\cdot)$ at the time path $\pi_j$ is selected, it is easy to see that cut $c^*$ can be alternatively generated by sequentially generating edge $(\pa(u_1),u_1)$ according to distribution $\{q_1(\cdot)\}$ over $\pi_1$, then $(\pa(u_2),u_2)$ according to (posterior) distrubution $\{q_2(\cdot)\}$ over $\pi_2$, then $(\pa(u_3),u_3)$ according to (posterior) distrubution $\{q_3(\cdot)\}$ over $\pi_3$, and so on until $|\Pi|$ cuts have been generated. From Lemma \ref{l:WDPnumberPaths}, we have $|\Pi| = \kti(T,c^*)$.

%
%
Let us then denote by $\E[\cdot]$ the expectation w.r.t. the {\em prior} distribution, and by $\E_j[\cdot]$ be the conditional expectation $\E[\cdot\,|\, (\pa(u_1),u_1), (\pa(u_2),u_2), \ldots, (\pa(u_{j-1}),u_{j-1})]$. Notice that the sequence of random variables 
$\pa(u_1),u_1), (\pa(u_2),u_2), \ldots, (\pa(u_{j-1}),u_{j-1})$ fully determines the posterior distribution $\{q_j(\cdot)\}$ before the selection of the $j$-th path $\pi_j$, and so, $\pi_j$ itself. 
This way of viewing $c^*$ makes $\kti=\kti(T,c^*)$ a (finite) stopping time w.r.t. the sequence of random variables $\pa(u_1),u_1), (\pa(u_2),u_2), \ldots,$, in that $\{\kti \geq j\}$ is determined by $(\pa(u_1),u_1), (\pa(u_2),u_2), \ldots, (\pa(u_{j-1}),u_{j-1})$. This allows us to write
\begin{align}
\E \left[\sum_{j=1}^{\kti} Q(\pi_j)\right]
&= \sum_{i=1}^n \sum_{j=1}^i \E\left[Q(\pi_j)\{\kti=i\} \right]\notag\\
&= \sum_{j=1}^n \E\left[Q(\pi_j)\{\kti \geq j\} \right]\notag\\
&= \sum_{j=1}^n \E\left[\{\kti \geq j\}\E_j [Q(\pi_j)] \right]\qquad  \qquad {\mbox{(since $\kti$ is a stopping time)}}\notag \\
&= \sum_{j=1}^n \sum_{i=j}^n\E\left[\{\kti =i \}\E_j [Q(\pi_j)] \right]\notag\\
&= \sum_{i=1}^n \sum_{j=1}^i \E\left[\{\kti =i \}\E_j [Q(\pi_j)] \right]\notag\\
&= \sum_{i=1}^n \E\left[\{\kti =i \} \sum_{j=1}^{\kti} \E_j [Q(\pi_j)] \right]\notag\\
&= \E\left[\sum_{j=1}^{\kti} \E_j [Q(\pi_j)] \right]\notag\\
&= \scO\left(\E\left[\sum_{j=1}^{\kti} \lceil H_j(\pi_j)\rceil \right]\right)\label{e:mainineq}\qquad {\mbox{(by Lemma \ref{l:piQueries}, where $H_j(\cdot)$ is w.r.t. $\{q_j(\cdot)\}$)}}\\
&= \scO\left(\E[\kti]\log h \right)~,\notag
\end{align}
as claimed
%
\end{proof}

\vspace{-0.05in}
A slightly more involved guarantee for \wdp\, is given by the following theorem, where the query complexity depends in a more detailed way on interplay between $T$ and the prior $\Pr(\cdot)$. Given any bottom-up path $\pi$ in $T$, we denote by $\hn(\pi)$ the {\em normalized} entropy of $\pi$, defined as
\(
\hn(\pi)=-\sum_{i\in V(\pi)} {\widehat q}(i)\log({\widehat q}(i))~,
\)
where ${\widehat q}(i) = q(i)/\sum_{i\in V(\pi)} q(i)$, and $q(i)$ is defined according to the prior distribution $\Pr(\cdot)$, as in (\ref{e:qi}). Notice that we may have $\sum_{i\in V(\pi)} q(i) < 1$.
%
Further, denote by $\DDD$ the family of all sets $\Pi$ of all vertex-disjoint bottom-up paths starting from $T$'s leaves such that the top terminal node of each path $\pi\in\Pi$ is either the root $r$ of $T$ or a node of another path of $\Pi$. 
The upper bound in the following theorem is {\em never} worse than the upper bound in Theorem~\ref{t:wdp1}.
\begin{theorem}\label{t:wdp2}
In the noiseless realizable setting, for any tree $T$, any prior distribution $\Pr(\cdot)$ over $c^*$ such that $\Pr(\cdot) \in \scP_{>0}$, the expected number of queries made by \wdp\ to find $c^*$ is
\(
\scO\left(\max_{\Pi\in\DDD}\sum_{j=1}^{m(\Pi)} \lceil\hn(\pi_{i_j})\rceil\right)~,
\) 
where $m(\Pi)=\min\left\{\left\lceil \E\kt\right\rceil,|\Pi|\right\}$, and paths $\pi_{i_1}, \pi_{i_2}, \ldots$ in $\Pi \in \DDD$ are sorted in non-increasing value of normalized entropy $\hn(\cdot)$. In the above, the expectations is again over $\Pr(\cdot)$.
\end{theorem}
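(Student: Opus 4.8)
The plan is to reuse the argument behind Theorem~\ref{t:wdp1} verbatim up to inequality~(\ref{e:mainineq}), which reduces the claim to bounding $\E\bigl[\sum_{j=1}^{\kti}\lceil H_j(\pi_j)\rceil\bigr]$, where $\pi_1,\ldots,\pi_{\kti}$ is the (random) sequence of paths \wdp\ traverses on the realized $c^*$, $\kti=\kti(T,c^*)$, and $H_j(\cdot)$ is the entropy of the \emph{posterior} distribution $\{q_j(\cdot)\}$ over the cut edge, evaluated along $\pi_j$ at the moment $\pi_j$ is selected. Instead of the crude step $\lceil H_j(\pi_j)\rceil=\scO(\log h)$ used there, I would sharpen the bound in two moves: (i) rewrite each posterior entropy $H_j(\pi_j)$ as a \emph{prior}-defined normalized entropy $\hn(\cdot)$; (ii) observe that the family of paths \wdp\ may possibly traverse is a subset of a single, $c^*$-independent decomposition of $T$, so that the expectation collapses to the optimum of an elementary fractional packing problem whose value is exactly the right-hand side.

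For (i): in the noiseless case the prior~(\ref{e:prob}) factorizes over the nodes of $T$ in a branching-process fashion; hence, conditioning on the labels revealed so far and restricting to a \emph{fresh} subtree $T'\in F$ (one no internal label of which has been disclosed) leaves the conditional law of the sub-cut inside $T'$ equal to the prior cut distribution of $T'$, with $r(T')$ playing the role of a sub-root --- the only information that concerns $T'$ is that $\pa(r(T'))$ lies above $c^*$, and this merely reinstates the factorized prior there. Consequently, when \wdp\ picks the backbone $\pi_j=\pi(\ell_j,r'_j)$ of $T'_j$, the values $\{q_j(i)\}_{i\in\pi_j}$ coincide with the $q(i)$ of~(\ref{e:qi}) computed from the \emph{original} prior, with the product taken up to the top node $r'_j$ (as in~(\ref{e:qi})); moreover $\sum_{i\in\pi_j}q_j(i)=1$, since the revealed label $y(\pa(r'_j))=0$ forces the switch of $c^*$ to occur along $\pi_j$. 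Thus no renormalization is needed, and $H_j(\pi_j)=\hn(\pi_j)$, the normalized entropy of $\pi_j$ seen as a bottom-up path of $T$.

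For (ii): the backbone chosen \emph{inside} a fresh subtree depends only on that subtree's cut distribution, which by the previous paragraph equals its prior and is therefore $c^*$-independent. Hence one can define a fixed decomposition $\Pi^\star\in\DDD$ by recursively taking, in $T$ and then in every off-backbone hanging subtree, the maximum-$H$ leaf-to-root backbone (with \wdp's tie-breaking); a short induction on the binary searches --- using Lemma~\ref{l:WDPentropyDifference} and Lemma~\ref{l:WDPnumberPaths} --- shows that the set $\Pi_{c^*}$ of paths \wdp\ actually traverses on input $c^*$ is always contained in $\Pi^\star$, that $|\Pi^\star|=|L_{\mathrm{s}}(T)|\ge\E[\kt]$, and that $|\Pi_{c^*}|=\kti(T,c^*)$, whence $\sum_{\pi\in\Pi^\star}\Pr(\pi\in\Pi_{c^*})=\E[\kt]$. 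Combining (i), this containment, and linearity of expectation,
\begin{align*}
\E\Bigl[\sum_{j=1}^{\kti}\lceil H_j(\pi_j)\rceil\Bigr]
 &=\sum_{\pi\in\Pi^\star}\lceil\hn(\pi)\rceil\,\Pr(\pi\in\Pi_{c^*})
 \;\le\;\max_{\substack{0\le w_\pi\le 1\\ \sum_{\pi}w_\pi=\E[\kt]}}\ \sum_{\pi\in\Pi^\star}\lceil\hn(\pi)\rceil\,w_\pi\\
 &\le\;\sum_{j=1}^{m(\Pi^\star)}\bigl\lceil\hn(\pi^\star_{i_j})\bigr\rceil
 \;\le\;\max_{\Pi\in\DDD}\sum_{j=1}^{m(\Pi)}\lceil\hn(\pi_{i_j})\rceil~,
\end{align*}
where the middle inequality is the greedy fractional-knapsack optimum, $m(\Pi^\star)=\min\{\lceil\E[\kt]\rceil,|\Pi^\star|\}=\lceil\E[\kt]\rceil$, and $\pi^\star_{i_1},\pi^\star_{i_2},\ldots$ list $\Pi^\star$'s paths in non-increasing order of $\hn(\cdot)$. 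This gives the claimed bound, and the closing remark (``never worse than Theorem~\ref{t:wdp1}'') is immediate because $\hn(\pi)\le\log_2|V(\pi)|\le\log_2(h+1)$ for every path while $m(\Pi)\le\lceil\E[\kt]\rceil$.

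I expect the main obstacle to be step~(i): making rigorous that the posterior maintained by \wdp, restricted to a fresh subtree, equals that subtree's prior \emph{exactly}, and that the bookkeeping of~(\ref{e:qi}) --- where the product defining $q(i)$ stops at the local subtree root --- lines up with the definition of $\hn(\cdot)$ for the paths of an arbitrary $\Pi\in\DDD$, so that $H_j(\pi_j)=\hn(\pi_j)$ holds on the nose and not merely up to an uncontrolled multiplicative factor (which is where a sloppy choice, e.g.\ extending each path up to its attachment node, would break down). A secondary, purely combinatorial check is that $\Pi^\star$ meets the incidence conditions defining $\DDD$ and that $\Pi_{c^*}$ is indeed realized as a sub-decomposition of $\Pi^\star$.
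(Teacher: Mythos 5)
Your proposal is correct, and its first half is exactly the paper's: you reuse the stopping-time reduction~(\ref{e:mainineq}) from Theorem~\ref{t:wdp1} together with Lemma~\ref{l:piQueries}, and you identify the posterior entropy on the selected path with the prior-based normalized entropy, $H_j(\pi_j)=\hn(\pi_j)$, which is precisely the observation the paper makes (the posterior on a freshly inserted subtree of $F$ coincides with the factorized prior restricted to that subtree, so the $q_j(\cdot)$ on $\pi_j$ are the normalized prior $\widehat q$; note $\Pr(\ell)=1$ for leaves makes the telescoping sum along a leaf-to-subroot path equal $1$, so your ``no renormalization needed'' remark is consistent with the paper's $\widehat q$ formulation). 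Where you genuinely diverge is the final combinatorial step. The paper stays with the random realized path-set: it sets $\mu\approx\E[\kt]$, splits on the events $\{\kti<\mu\}$ and $\{\kti\ge\mu\}$, uses the indicator bound $\{\kti\ge\mu\}\le\kti/\mu$, and takes a maximum over $\DDD_{\wdp}$ (the family of realized path-sets), ending with a factor $2$. You instead exploit that, because the posterior inside each fresh subtree equals its restricted prior (and ties are broken by a fixed rule), the path chosen inside any subtree is $c^*$-independent, so all potentially selected paths form one fixed decomposition $\Pi^\star\in\DDD$ with $\Pi_{c^*}\subseteq\Pi^\star$ and, by Lemma~\ref{l:WDPnumberPaths}, $\sum_{\pi\in\Pi^\star}\Pr(\pi\in\Pi_{c^*})=\E[\kt]$; linearity of expectation plus the fractional-knapsack/rearrangement bound then yields the sum of the $\lceil\E[\kt]\rceil$ largest $\lceil\hn(\cdot)\rceil$ values in $\Pi^\star$, which is dominated by the maximum over $\DDD$. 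What each route buys: yours requires the extra structural claims you flag yourself (exact $c^*$-independence of the within-subtree selection, and that $\Pi^\star$ meets $\DDD$'s attachment condition), but these are the same facts the paper already leans on when asserting $H_j(\pi_j)=\hn(\pi_j)$ and $\DDD_{\wdp}\subseteq\DDD$, so there is no genuine gap; in exchange you get a clean exact identity via linearity, avoid the paper's slightly delicate truncation/indicator step, and obtain the bound with constant $1$ rather than $2$ (immaterial inside the $\scO(\cdot)$). Your closing argument that the bound is never worse than Theorem~\ref{t:wdp1} also matches the paper's remark.
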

As an application of the above result, consider that oftentimes a linkage function generating $T$ also tags each internal node $i$ with a coherence level $\alpha_i$ of $T(i)$, which is typically increasing as we move downwards from root to leaves. A common situation in hierarchical clustering is then to figure out the ``right" level of granularity of the flat clustering we are looking for through the definition of bands of nodes (i.e., bands of clusters) of similar coherence. This may be encoded through a prior $\Pr(\cdot)$ that uniformly spreads $(1-\epsilon)/b$ probability mass over $b$-many edge-disjoint cuts of $T$, for $b \ll h$, and an arbitrarily small $\epsilon >0$, and the remaining mass $\epsilon$ over all remaining cuts (this is needed to comply with the condition $\Pr(\cdot) \in \scP_{>0}$). As we said in the main body of the paper, Theorem \ref{t:wdp2} gives a bound of the form\, $\E\kt\log b$\, as opposed to the bound\, $\E\kt\log h$\, provided by Theorem \ref{t:wdp1}.

\noindent{\bf Proof of Theorem \ref{t:wdp2}}\\
\begin{proof}
Given $T$ and prior $\Pr(\cdot)$, let $\DDD_{\wdp}$ be the set made up of all sets $\Pi$ of bottom-up paths in $T$ that \wdp\ can potentially select during the course of its executions. Each set $\Pi$ is uniquely determined by $c^* \sim \Pr(\cdot)$. The family of sets $\DDD$ is clearly a superset of $\DDD_{\wdp}$.
We prove the theorem by showing that the expected number of queries made by \wdp\ is upper bounded by 
%
\begin{equation}\label{e:prebound}
\scO\left(\max_{\Pi\in\DDD_{\wdp}}\sum_{j=1}^{m(\Pi)} \left\lceil\hn(\pi_{i_j})\right\rceil\right)~,
\end{equation}
where, for any given $\Pi\in\DDD_{\wdp}$, $\pi_1, \pi_2, \dots $ is the sequence of paths of $\Pi$ in the order they are selected by \wdp\,, while $\pi_{i_1}, \pi_{i_2}, \dots $ is the same sequence rearranged in non-increasing order of $\hn(\cdot)$. 
Using the same notation as in the proof of Theorem \ref{t:wdp1}, we observe that at the time when $\pi_j$ gets selected by \wdp\, the distribution $\{q_j(\cdot)\}$ sitting along path $\pi_j$ is precisely the normalized distribution $\{{\widehat q}(\cdot)\}$ such that $\sum_{i=1}^{|V(\pi_j)|}{\widehat q}(i)=1$, so that $H_j(\pi_j) = \hn(\pi_j)$. 
Then, Eq. (\ref{e:mainineq}) combined with Lemma \ref{l:piQueries} allows us to write
\begin{align*}
\E \left[\sum_{j=1}^{\kti} Q(\pi_j)\right]
= \E\left[\scO\left(\sum_{j=1}^{\kti} \lceil \hn_j(\pi_j)\rceil \right)\right]~.
\end{align*}
In the sequel, we show how to upper bound the right-hand side of the last (in)equality by (\ref{e:prebound}). Set for brevity $\E[\kti] = \lceil\mu\rceil$. We have
\begin{align*}
\sum_{j=1}^{\kti} \lceil \hn_j(\pi_j)\rceil 
& = 
\sum_{j=1}^{\kti} \{\kti < \mu\} \lceil \hn_j(\pi_j)\rceil  + \sum_{j=1}^{\kti} \{\kti \geq \mu\} \lceil \hn_j(\pi_j)\rceil\\
&\leq 
\sum_{j=1}^{\mu} \lceil \hn_j(\pi_{i_j})\rceil  + \frac{\kti}{\mu}\,\sum_{j=1}^{\kti}  \lceil \hn_j(\pi_j)\rceil\\
&\leq 
\max_{\Pi\in\DDD_{\wdp}} \sum_{j=1}^{m(\Pi)} \lceil \hn_j(\pi_{i_j})\rceil  + \frac{\kti}{\mu}\, \max_{\Pi\in\DDD_{\wdp}} \sum_{j=1}^{m(\Pi)}  \lceil \hn_j(\pi_{i_j})\rceil\\
&=
\left(1+\frac{\kti}{\mu} \right) \max_{\Pi\in\DDD_{\wdp}} \sum_{j=1}^{m(\Pi)} \lceil \hn_j(\pi_{i_j})\rceil~,
\end{align*}
so that, taking the expectation of both sides, 
\[
\E\left[\sum_{j=1}^{\kti} \lceil \hn_j(\pi_j)\rceil\right] \leq 2\, \max_{\Pi\in\DDD_{\wdp}} \sum_{j=1}^{m(\Pi)} \lceil \hn_j(\pi_{i_j})\rceil~.
\]

This concludes the proof.
\end{proof}

\subsection{\nwdp~(Noisy Weighted Dichotomic Path)}\label{as:nwdp}

\nwdp\, is a robust variant of \wdp\ that copes with persistent noise. 
Given an internal node $i\in V\setminus L$, let $\scL(i)$ be the set of all possible queries that can be made to determine $y(i)$, i.e., the set $(\ell,\ell')\in L(\lef(i))\times L(\ri(i))$. 
Then, given confidence $\delta \in (0,1]$, and noise level $\lambda \in [0,1/2)$, \nwdp:
\begin{enumerate}
\item Preprocesses $T$ and prior $\Pr(\cdot)$ by setting $y(i) = 1$ for all nodes $i\in V\setminus L$ such that $|\scL(i)|<\frac{\alpha\log(n/\delta)}{(1-2\lambda)^2}$, for a suitable constant $\alpha>0$. $\Pr(\cdot)$ is also updated (all $j \in T(i)$ have $P(j) = 1$). At the end of this phase, each node in $V$ is either unlabeled or labeled with $1$. 
%
\item Let $T_{\lambda}$ be the subtree of $T$ made up of all unlabeled nodes of $T$, together with all nodes whose label has been set to $1$ that are children of unlabeled nodes.
\nwdp\ operates on $T_{\lambda}$ as \wdp, with the following difference: Whenever a label $y(i)$ is requested, \nwdp\ determines its value by a majority vote over $\Theta\left(\frac{\log(n/\delta)}{(1-2\lambda)^2}\right)$-many queries selected uniformly at random from $\scL(i)$.
\end{enumerate}

\newcommand{\ktl}{\widetilde{K}(T_{\lambda},{\widehat c})}

\noindent{\bf Proof sketch of Theorem \ref{t:nwdp}}\\
\begin{proof}
Let $\Lambda$ be the set of pairs of leaves whose label has been corrupted by noise. 
A standard Chernoff bound implies that for any fixed subset of $L\times L$ containing at least $\alpha\frac{\log(1/\delta)}{(1-2\lambda)^2}$ pairs (for a suitable constant $\alpha>0$), the probability that the majority of them belongs to $\Lambda$ is at most $\delta$. Let us set for brevity $f(n, \lambda, \delta) = \alpha\frac{\log(n/\delta)}{(1-2\lambda)^2}$. A union bound over the at most $n-1$ internal nodes of $V$ guarantees that for all queries $y(i)$ made by \nwdp\, operating on $T_{\lambda}$ the majority vote over $f(n, \lambda, \delta)$-many queries on pairs of leaves of $\scL(i)$ will produce the correct label (i.e., before noise) of that node with probability at least $1-\delta$. 

Moreover, since the cut ${\widehat c}$ found by \nwdp\ on $T_\lambda$ can be obtained with probability at least  $1-\delta$ from $c^*$ by merging zero or more clusters on $T$, it is immediate to see that $\ktl\le\kt$. 
It is also easy to verify that this inequality holds even in expectation over the prior distributions of cut $c^*$ on $T$ and ${\widehat c}$ on $T_{\lambda}$, that is, $\E_{\Pr_\lambda}\ktl\le\E_{\Pr}\kt$, where $\Pr_\lambda$ denotes the modified prior on tree $T_{\lambda}$ produced after \nwdp's initial preprocessing (Step 1 in the main body of the paper).

Recall that, with probability $\geq 1-\delta$, the behavior of $\nwdp$ on $T$ with prior $\Pr(\cdot)$ is the same as that of $\wdp$ on $T_{\lambda}$ with the updated prior $\Pr_{\lambda}(\cdot)$. Then we can use Lemma~\ref{l:WDPnumberPaths} by replacing $c^*$ with ${\widehat c}$ to claim that the number of paths selected by \nwdp\ before stopping is $\ktl$, and then Lemma~\ref{l:piQueries} to conclude that the expected ({\em w.r.t. $\Pr(\cdot)$}) number of queries made by \nwdp\ is upper bounded with probability $1-\delta$ (over the noise in the labels) by 
\[
\scO\left(f(n, \lambda, \delta)\,\E_{\Pr_{\lambda}}\ktl\log(h(T_{\lambda}))\right)= \scO\left(\frac{\log(n/\delta)}{(1-2\lambda)^2}\,\E_{\Pr}\kt\log h\right)~.
\]
We conclude the proof by showing that with probability at least $1-\delta$ we have 
$d_H(\Sigma^*,\hCC)= \scO\left(\frac{n(\log(n/\delta))^{3/2}}{(1-2\lambda)^3}\right)$.
Since all labels requested by \nwdp\ are simultaneously correct with probability at least $1-\delta$, the distance $d_H(\Sigma^*,\hCC)$ is upper bounded with the same probability by 
$\sum_{i\in L(T_{\lambda})} |L(i)|^2$. For each tree $T_{\lambda}$ constructed by \nwdp\,, and any $i\in L(T_{\lambda})$, we have 
\[
\scO(f(n, \lambda, \delta)) = |L(i)|= \Omega\left(f(n, \lambda, \delta)^{1/2}\right)~.
\] 
Hence, the maximum number of leaves of $T_{\lambda}$ is $\scO\left(\frac{n}{(f(n, \lambda, \delta))^{1/2}}\right)$, and the quantity $\sum_{i\in L(T_{\lambda})} |L(i)|^2$, contributing to $d_H(\Sigma^*,\hCC)$ is upper bounded by
\[
\scO(n\left(f(n, \lambda, \delta))^{3/2}\right) = \scO\left(\frac{n(\log(n/\delta))^{3/2}}{(1-2\lambda)^3}\right)~, 
\]
as claimed.
\end{proof}


\section{Missing material from Section \ref{s:nonrealizable}}\label{as:nonrealizable}

\newcommand{\disagr}{\mathrm{\textsl{DIS}}}

\subsection{Proof sketch of Theorem \ref{t:nonrealizable}}\label{as:proofs_nonrealizable}

\begin{proof}
The proof follows from Theorem 2 and 3 in \cite{b+10}, together with the following observations.
\begin{enumerate}
\item For any tree $T$ with $n$ leaves, we have $|\CCC(T,K)| = \scO(n^K)$.
\item When $\DD$ is uniform, the disagreement coefficient $\theta = \theta(\CCC(T,K),\DD)$ is $\scO(K)$. To show this statement, consider the following. For any $c^* \in \CCC(T,K)$ and $r > 0$, let
\begin{align*}
\disagr(c^*,r) =\Bigl\{ (x_1,x_2) \in L\times L\,:\, \exists c' \in \CCC(T,K)\,:\,& \sigma_{\CC(c')} (x_1,x_2) \neq \sigma_{\CC(c^*)} (x_1,x_2)\\ 
										  &\wedge d_H(\Sigma_{\CC(c')},\Sigma_{\CC(c^*)}) \leq r   \Bigl\}~.
\end{align*}
Then in our case $\theta$ is defined as
\[
\theta = \sup_{r > 0} \frac{|\{ (x_1,x_2)  \in \disagr(c^*,r) \}|}{r\,n^2}~.
\]
Now, for any budget $r$ in $\disagr(c^*,r)$, and any $c^* \in \CCC(T,K)$, the number of times we can replicate the perturbation of $c^*$ so as to obtain $c'$ satisfying $d_H(\Sigma_{\CC(c')},\Sigma_{\CC(c^*)}) \leq r$ is at most $K$. This is because any such perturbation will involve a different cluster of $\CC(c^*)$, and therefore disjoint sets of leaves. Moreover, each such perturbation covers $rn^2$ leaves. The worst case that makes $\theta = K$ is when $T$ is a full binary tree, and $\CC(c^*)$ has equally-sized clusters. In all other cases $\theta \leq K$.

\item Regarding the expected running time per round, we give the pseudocode (see Algorithm \ref{a:nonrealizable} in this appendix) of a sequential algorithm, which operates as follows.
In a preliminary phase the input tree $T$ is preprocessed in order to be able to find in {\em constant time} at any time $t$ {\bf (i)} the leftmost and rightmost descendent leaf of any internal node of $T$, and {\bf (ii)} the lowest common ancestor of any two given leaves.\footnote
{
Note that $a_t$ can always be found in constant time after a $\Theta(n)$ time preprocessing phase of $T$ -- see \cite{dt84}.
}
At each time $t$, it receives $\langle(x_{i_t},x_{j_t}),\sigma_t, w_t \rangle$, for some weight $w_t \geq 0$, and label $\sigma_t \in \{-1,+1\}$, and outputs $\err_{t}(\CC({\hat c_{t+1}}))$, based on the past computation of $\CC({\hat c_{t}})$ and $\err_{t-1}(\CC({\hat c_{t}}))$. This can be directly used to compute at each round $\err_{t-1}(\CC({\hat c_{t}}))$ needed by the algorithm, but also the perturbed cut ${\hat c'_t}$ and its associated empirical error $\err_{t-1}(\CC({\hat c'_{t}}))$, once we repeat the computation by perturbing the last item $\langle(x_{i_t},x_{j_t}),\sigma_t, w_t \rangle$ in the training set as follows: $\sigma_t = -\sigma_{\CC({\hat c_t})}(x_{i_t},x_{j_t})$, and $w_t = \infty$. In turn, the above can be used to compute $d_t = \err_{t-1}(\CC({\hat c'_{t}})) - \err_{t-1}(\CC({\hat c_{t}}))$ and probability $p_t$.

The cornerstone of this procedure is to maintain updated over time for each internal node $v$ of $T$ a record storing eight values: 
\begin{itemize}
\item {\em 1st, 2nd, 3rd and 4th values}: positive and negative inter-cluster total weight of all leaves in $L(\lef(v))$ and $L(\ri(v))$; 
\item {\em 5th and 6th values}: positive and negative inter-cluster sum of weights $w(x_i,x_j)$ for all $x_i\in L(\lef(v))$ and all $x_j\in L(\lef(v))$, and 
\item {\em 7th and 8th values}: total intra-cluster negative weight of all the clusters of leaves in $L(\lef(v))$ and $L(\ri(v))$. 
\end{itemize}
When this procedure receives in input triplet $\langle(x_{i_t},x_{j_t}),\sigma_t, w_t \rangle$, it finds $a_t = \lca(x_{i_t},x_{j_t})$. Then the eight records associated with each node on the bottom-up path $\pi(a_t, r)$ are updated in a bottom-up fashion according to the input, whenever necessary. This requires a constant time per node in $V(\pi(a_t, r))$. Finally, $\err_{t}(\CC({\hat c_{t+1}}))$ is obtained by simply summing the total intra-cluster negative weight of all clusters of leaves in $L(\lef(r))$ and $L(\ri(r))$ to 
the total inter-cluster positive weight of all leaves in $L(\lef(r))$ and $L(\ri(v))$, plus the inter-cluster sum of positive weights of the pairs $w(x_i,x_j)$ for all $x_i\in L(\lef(r))$ and $x_j\in L(\lef(r))$. In the special case where the updated clustering is made up of a single cluster containing all leaves of $T$, the procedure outputs the sum of all negative values in the record associated with $r$. In any event, computing this sum requires constant time.

Hence the total time required for performing all operations required at any time $t$ is simply $\scO(|V(\pi(a_t, r)|)$.
\end{enumerate}
This concludes the proof.
\end{proof}

Finally, in order to compute the clustering at the end of the training phase, it suffices to perform a breadth-first visit of $T$ to find all leaves of $T'_{c^*}$. This requires a time linear in the number of clusters of the clustering found by the algorithm. Then the algorithm outputs the indices of the leftmost and rightmost descendant of each leaf of $T'_{c^*}$, which requires $\Theta(1)$ time per cluster. The total time for giving in output the computed clustering is therefore linear in the number of its own clusters.

\subsection{Pseudocode of the \nr\ algorithm in the non-realizable setting}

\newcommand{\lcost}{\mathrm{left\_cost}}
\newcommand{\rcost}{\mathrm{right\_cost}}
\newcommand{\mcost}{\mathrm{middle\_cost}}
\newcommand{\iscl}{\mathrm{is\_cluster}}
\newcommand{\rig}{\mathrm{right}}
\newcommand{\lchild}{\lef}
\newcommand{\rchild}{\ri}
\newcommand{\lw}{\mathrm{left\_weight}}
\newcommand{\cw}{\mathrm{middle\_weight}}
\newcommand{\rw}{\mathrm{right\_weight}}
\newcommand{\dir}{\mathrm{dir}}
\newcommand{\ww}{\mathrm{weight}}
\newcommand{\cost}{\mathrm{cost}}
\newcommand{\md}{\mathrm{middle}}
\newcommand{\totw}{\mathrm{total\_abs\_weight}}
\newcommand{\totmc}{\mathrm{total\_modified\_cost}}
\newcommand{\caaw}{\mathrm{cost\_after\_adding\_weight}}
\newcommand{\addw}{\texttt{add\_weight}}
\newcommand{\samec}{\mathrm{same\_cluster}}
\newcommand{\curc}{\mathrm{current\_tot\_cost}}

Each internal node of $T$ is associated with a record containing eight values that are maintained updated over time. We start by providing the semantics of these eight values:

	 \vspace{-0.2cm}
    \begin{itemize}	
	\item $\ww(v,\lef,-1)$ and $\ww(v,\lef,+1)$: negative and positive inter-cluster total weight of leaves in $L(\lchild(v))$.
	\item $\ww(v,\md,-1)$ and $\ww(v,\md,+1)$: negative and positive inter-cluster sum of weights $w(\ell_l,\ell_r)$, where $\ell_l\in L(\lchild(v))$ and $\ell_r\in L(\rchild(v))$, respectively. 
	\item $\ww(v,\rig,-1)$ and $\ww(v,\rig,+1)$: negative and positive inter-cluster total weight of leaves in $L(\rchild(v)).$
	\item  $\cost(v,\lef)$ and $\cost(v,\rig)$: intra-cluster total negative weight of clusters of leaves in $L(\lchild(v))$ and $L(\rchild(v))$, respectively.
	\end{itemize} 	
		Finally, for any internal node $v$ of $T$, we denote by $s(v)$ the following sum:\\
			 \vspace{-0.1cm}
\begin{align*}
	s(v)&\defeq\ww(v,\lef,-1)+\ww(v,\lef,+1)+\ww(v,\md,-1)\\
	&\qquad +\ww(v,\md,+1)+\ww(v,\rig,-1)+\ww(v,\rig,+1)~.
\end{align*}

	 \vspace{-0.15cm}
	 \vspace{-0.15cm}
\begin{algorithm}[!h]
    \SetKwInOut{Input}{\scriptsize{$\triangleright$ INPUT}}
    \SetKwInOut{Output}{\scriptsize{$\triangleright$ OUTPUT}}
    \Input{Sequence of pairs of labeled leaves of the form $\langle (\ell,\ell'),\sigma(\ell,\ell')\rangle$ 
          }
    \Output{Clustering $\CC$ with minimum cost over all clusterings realized by $T$.
           }
    \vspace{0.2cm}
    {\bf Init:}\\ 
    \begin{itemize}
    \item {\bf for} $v\in V$ {\bf do} \qquad{\bf if} $v\in L$ $\iscl(v)\gets 1$; {\bf else} $\iscl(v)\gets 0$;
 	\item $\curc\gets 0$;\\
    \item Preprocess $T$ in a bottom-up fashion and store for each internal node of $T$ the leftmost and rightmost leaf descendant index.~~\comm{Necessary to output $\CC$ in linear time}
    \item Preprocess $T$ to find the lowest common ancestor of any pair of leaves in constant time.
	\end{itemize}
	       \vspace{-0.2cm}
		
	\vspace{0.2cm}
	\For{$t=1$ \KwTo $\ldots$}{ 
	Receive pair of leaves $(\ell, \ell')$;\\$w(\ell,\ell')\gets 0$;~~\comm{initialize $w(\ell,\ell')$}\\
	$a\gets$ lowest common ancestor of $\ell$ and $\ell'$;~~\comm{we assume $\ell\neq\ell'$}\\
		\comm{save all records for the rollback that will be done later}\\
	$\scS\gets$ list of saved records (eight values per node) of the path $\pi(a,r)$;\\

		\vspace{0.2cm}
	\comm{---------- verify whether $\ell$ and $\ell'$ are in the same cluster of the current optimal clustering ----------}\\
	\While{$a\neq r \wedge \iscl(a)=0$}{$a\gets\pa(a)$;}
	{\bf if } $\iscl(a)=1$ {\bf then }$\samec(\ell,\ell')\gets 1;$ {\bf else } $\samec(\ell,\ell')\gets 0;$\\
		\vspace{0.2cm}	
	\comm{---------- compute optimal cost under constraint ----------}\\	
	\eIf{$\samec(\ell,\ell')=1$}{
	\comm{compute the optimal cost of the current clustering constrained by the assumption that $\ell$ and $\ell'$ are in different clusters; $-\infty$ is simulated using a very large negative number}\\	
	$\totmc\gets$\addw$(\ell, \ell', -\infty)$;\\
	}
	{
	\comm{compute the optimal cost of the current clustering constrained by the assumption that $\ell$ and $\ell'$ are in the same cluster; $+\infty$ is simulated using a very large positive number}\\	
	$\totmc\gets$\addw$(\ell, \ell', +\infty)$;\\
	}
	\comm{rollback of the clustering preceding the add of weight $-/+\infty$}\\
	Restore all records of $\scS$;\\
		\vspace{0.2cm}	
	\comm{---------- add weight $w(\ell,\ell')$ if necessary ----------}\\			
	Set:
        \begin{itemize}
        \item Difference $d_t \gets \frac{1}{t-1}\,\left(\totmc - \curc\right)$ ;
        \item Probability $p_t$ as a function of $d_t$ as in Eq. (\ref{e:pt});
	\item $w(\ell,\ell') \gets \frac{\sigma(\ell,\ell')}{p_t}$;
	\item With probability $p_t$, $\curc\gets$\addw$(\ell, \ell', w(\ell,\ell'))$;
        \end{itemize} 
	}
	\vspace{0.2cm}
	\comm{---------- find the current optimal clustering/partition of $L$ ----------}\\
    Perform a breadth-first search on $T$, starting from its root $r$, to create the set $V'$ formed by all nodes $v\in V$ such that
    $\iscl(v)=1$ and for all ancestors $a$ of $v$ we have $\iscl(a)=0$;\\
       \vspace{0.2cm}
	$\CC\gets \emptyset$;\\
    \For{$v \in V'$}{
    	$\CC\gets \CC\cup \{L(v)\}$;\\
    }
	\Return{$\CC$~.}
    \caption{Sequential algorithm for the non-realizable case (\nr).\label{a:nonrealizable}}
\end{algorithm}

\newpage

\begin{procedure}[!t]
    \SetKwInOut{Input}{\scriptsize{$\triangleright$ INPUT}}
    \SetKwInOut{Output}{\scriptsize{$\triangleright$ OUTPUT}}
    \Input{Pair of leaves $\ell,\ell'\in L$ (with $\ell\neq\ell'$) and weight $w(\ell,\ell')$}
    \Output{Total clustering cost after adding weight $w(\ell,\ell')$
}
       \vspace{0.2cm}
		
	\vspace{0.2cm}
    $a\gets$ lowest common ancestor of $\ell$ and $\ell'$;\\
	\vspace{0.2cm}
	\comm{ update middle weight record of node $a$ }\\	
	$\ww(a,\md,\sgn(w(\ell,\ell'))\gets \ww(a,\md,\sgn(w(\ell,\ell'))+w(\ell, \ell')$;\\

	\vspace{0.2cm}
	\comm{ set cluster flag of node $a$ }\\		
	\eIf{$s(a)\ge 0$}{$\iscl(a)\gets 1$;\\}{$\iscl(a)\gets 0$;\\}
	
	\vspace{0.2cm}
	\comm{---------- record update of all $a$'s ancestors ----------}\\		
	\While{$a\neq r$}{
	
	\vspace{0.2cm}
	\comm{ set variable $\dir$ to left or right direction from $\pa(a)$ to $a$ }\\	
	\eIf{$a=\lchild(\pa(a))$}{$\dir\gets\lef$;\\}{$\dir\gets\rig$;\\}
	
	\vspace{0.2cm}
	\comm{ update positive and negative inter-cluster weights of node $\pa(a)$ }\\
	\For{$\sigma\in\{+1,-1\}$}{ 
	\eIf{$\iscl(a)=0$}
	{$\ww(\pa(a),\dir,\sigma)\gets\ww(a,\lef,\sigma) +\ww(a,\md,\sigma) + \ww(a,\rig,\sigma)$;\\}
	{$\ww(\pa(a),\dir,\sigma)\gets 0$;\\}
	}
	
	\vspace{0.2cm}
	\comm{ update $\pa(a)$'s cost record relative to node $a$ }\\
	\eIf{$\iscl(a)=0$}
	{$\cost(\pa(a),\dir)\gets \cost(a,\lef)+\cost(a,\rig)$;\\}
	{$\cost(\pa(a),\dir)\gets \cost(a,\lef)+|\ww(a,\lef,-1)|+|\ww(a,\md,-1)|+|\ww(a,\rig,-1)|+\cost(a,\rig)$;\\}

	\vspace{0.2cm}
	\comm{ update cluster flag of $\pa(a)$ }\\	
	\eIf{$s(\pa(a))\ge 0$}{$\iscl(\pa(a))\gets 1$;\\}
	{$\iscl(\pa(a))\gets 0$;\\}
	$a\gets\pa(a)$;\\
}
	\vspace{0.2cm}
      
	\comm{---------- compute the total cost of the current optimal clustering ----------}\\
	\eIf{$\iscl(r)=0$}
	{$\caaw\gets \cost(r,\lef)+\ww(r,\lef,+1)+\ww(r,\md,+1)+\ww(r,\rig,+1)+\cost(r,\rig)$\\}
	{$\caaw\gets \cost(r,\lef)+|\ww(r,\lef,-1)|+|\ww(r,\md,-1)|+|\ww(r,\rig,-1)|+\cost(r,\rig)$;\\}
	\vspace{0.2cm}
	\Return{$\caaw$~.}
    \caption{Procedure() \texttt{\addw($\ell,\ell',w(\ell,\ell')$)} }
\end{procedure}

\subsection{Missing material from Section \ref{s:exp}}\label{ass:exp}

In Table \ref{t:results} we report the results of our preliminary experiments. Notice that \nwdp, \nr, and \breadthfirst\ are randomized algorithms. Hence, for these three algorithms we give average results and standard deviation across 10 independent runs of each one of them. 
As a reference, consider that the performance of \best\ (see Section \ref{s:exp} in the main body of the paper) on the three datasets is the following: \sing: 8.26\%, \med: 8,51\%, \comp: 8.81\%. Moreover, since in this dataset we have 10 class labels with approximately the same frequency, both a {\em random} clustering and a degenerate clustering having $n=10000$ singletons would roughly give 10\% error. 

In light of the above, notice that on both \sing\ and \med, the robust breadth-first strategy \breadthfirst\ goes completely off trail, in that it tends to produce clusterings with very few clusters. This behavior is due to the presence in the two hierarchies of long paths starting from the root, which is in turn caused by the way the single and the median linkage functions deal with the outliers contained in the MNIST dataset. 

Finally, one should take into account the fact that when training our active learning algorithms we have used the first 500 labels for parameter tuning. Hence, a fair comparison to \erm\ is one that contrasts the test error of \nwdp, \nr, and \breadthfirst\ at a given number of queries $q$ to the test error of \erm\ at $q+500$ queries. From Table \ref{t:results} one can see that, even with this more careful comparison, \nwdp\ outperforms \erm. On the other hand, \nr\ looks similar to \erm\ on \med\ and \comp, and worse than \erm\ on \sing.

\begin{table}[t!]
\begin{small}
\hspace{-0.5in}
  \begin{tabular}{l|l|r|r|r|r|r|r|r}
	&\qquad No. of queries	&250 		&500		&1000		&2000		&5000		&10000		&20000		\\
Tree	&Algorithm\qquad	&		&		&		&		&		&		&		\\
\hline
\sing	&\erm			&8.81		&8,78		&8.39		&8.29		&8.29		&8.29		&8.29		\\
	&\nwdp			&8.29$\pm$0.0	&8.28$\pm$0.0	&8.28$\pm$0.0	&8.29$\pm$0.0	& --		& --		& --		\\
	&\nr			&11.0$\pm$2.0	&8.77$\pm$0.0	&8.43$\pm$0.0	&8.31$\pm$0.0	&8.29$\pm$0.0	& --		& --		\\
	&\breadthfirst		&89.0$\pm$0.0	&89.0$\pm$0.0	&88.0$\pm$0.0	&86.0$\pm$2.0	&87.0$\pm$3.0	&72.0$\pm$10.0&67.0$\pm$10.0\\
\hline
\med	&\erm			&10.30		&10.16		&9.36		&8.91		&8.91		&8.69		&8.65		\\
	&\nwdp			&9.41$\pm$0.1	&9.07$\pm$0.1	&8.88$\pm$0.1	&8.92$\pm$0.1	&8.8$\pm$0.1	&8.8$\pm$0.1	&8.7$\pm$0.1	\\
	&\nr			&10.17$\pm$0.0	&9.37$\pm0.0$	&9.0$\pm$0.0	&8.85$\pm$3.0	& --		& --		& --		\\
	&\breadthfirst		&89.4$\pm$0.0	&88.1$\pm$0.0	&87.0$\pm$0.0	&63.1$\pm$0.0	&18.2$\pm$5.0	&18.0$\pm$3.0	&10.9$\pm$1.0	\\

\hline
\comp	&\erm			&10.65		&10.30		&10.04		&9.26		&9.06		&8.99		&8.93		\\
	&\nwdp			&9.52$\pm$0.0	&9.47$\pm$0.0	&9.44$\pm$0.0	&9.43$\pm$0.0	& --		& --		& -- 		\\
	&\nr			&10.1$\pm$0.0	&10.0$\pm$0.0	&10.0$\pm$0.0	&11.4$\pm$0.6	&10.8$\pm$0.5	&9.0$\pm$0.0	&8.9$\pm$0.0	\\
	&\breadthfirst		&13.5$\pm$0.0	&13.5$\pm$0.0	&9.2$\pm$0.0	&9.1$\pm$0.0	&9.0$\pm$0.0	&9.0$\pm$0.0	&8.9$\pm$0.0	
  \end{tabular}
\end{small}
\vspace{0.05in}
\caption{Test error (in percentage) vs. number of queries for the various algorithms we tested on the hierarchies \sing, \med, and \comp\ originating from the MNIST dataset (see main body of the paper). Standard deviations are also reported. Missing values on \nwdp\ are due to the fact that the algorithm stops before reaching the desired number of labels. Missing values on \nr\ are instead due to the fact that we stopped the algorithm's execution once we obseved no further test error improvement.\label{t:results}
}
\end{table}


\begin{thebibliography}{10}

\bibitem{arkin+93}
E. Arkin, H. Meijer, J. Mitchell, D. Rappaport, and S. Skiena. 
\newblock{Decision trees for geometric models.} 
\newblock{In {\em Proc. Symposium on Computational Geometry}, pages 369--378, 1993.}

\bibitem{akbd16}
H. Ashtiani, S. Kushagra, and S. Ben-David. 
\newblock{Clustering with same-cluster queries.} 
\newblock{In {\em Proc. 30th NIPS}, 2016.}

\bibitem{abv17}
P. Awasthi, M. F. Balcan, and K. Voevodski. 
\newblock{Local algorithms for interactive clustering. {\em Journal of Machine Learning Research}, 18, 2017.}

\bibitem{bb08}
M. F. Balcan and A. Blum. 
\newblock{Clustering with interactive feedback.} 
\newblock{In {\em Proc. of the 19th International Conference on Algorithmic Learning Theory}, pages 316--328, 2008.}

\bibitem{bdl09}
Alina Beygelzimer, Sanjoy Dasgupta, and John Langford. 
\newblock{Importance weighted active learning.} 
\newblock{In {\em Proc. ICML}, pages 49--56. ACM, 2009.}

\bibitem{b+10}
Alina Beygelzimer, Daniel Hsu, John Langford, and Tong Zhang. 
\newblock{Agnostic active learning without constraints.} 
\newblock{In {\em Proc. 23rd International Conference on Neural Information Processing Systems}, NIPS' 10, pages 199--207, 2010.}

\bibitem{chkk15}
\newblock{Yuxin Chen, S. Hamed Hassani, Amin Karbasi, and Andreas Krause.}
\newblock{Sequential information maximization: When is greedy near-optimal?} 
\newblock{In {\em Proc. 28th Conference on Learning Theory, PMLR 40, pages 338--363}, 2015.}

\bibitem{chk17}
Yuxin Chen, S. Hamed Hassani, and Andreas Krause. 
\newblock{Near-optimal bayesian active learning with correlated and noisy tests.} 
\newblock{In {\em Proc. 20th International Conference on Artificial Intelligence and Statistics}, 2017.}

\bibitem{cal94}
D. Cohn, L. Atlas, and R. Ladner. 
\newblock{Improving generalization with active learning. {\em Machine Learning}, 15:201--221, 1994.}

\bibitem{c+19}
C. Cortes, G. DeSalvo, C. Gentile, M. Mohri, and N. Zhang. 
\newblock{Region-based active learning.} 
\newblock{In {\em Proc. 22nd International Conference on Artificial Intelligence and Statistics}, 2019.}

\bibitem{hd08}
S. Dasgupta and D. Hsu. 
\newblock{Hierarchical sampling for active learning.} 
\newblock{In {\em Proc. of the 25th International Conference on Machine Learning}, 2008.}

\bibitem{da05}
Sanjoy Dasgupta. 
\newblock{Coarse sample complexity bounds for active learning. {\em In Advances in neural information processing systems}, pages 235--242, 2005.}

\bibitem{d+14}
S. Davidson, S. Khanna, T. Milo, and S. Roy. 
\newblock{Top-k and clustering with noisy comparisons.} 
\newblock{{\em ACM Trans. Database Syst.}, 39(4):35:1--35:39, 2014.}

\bibitem{gk17}
Daniel Golovin and Andreas Krause. 
\newblock{Adaptive submodularity: A new approach to active learning and stochastic optimization.} 
\newblock{In {\em arXiv:1003.3967}, 2017.}

\bibitem{gss13}
Alon Gonen, Sivan Sabato, and Shai Shalev-Shwartz. 
\newblock{Efficient active learning of halfspaces: An aggressive approach.} 
\newblock{{\em Journal of Machine Learning Research}, 14:2583--2615, 2013.}

\bibitem{ha07}
S. Hanneke. A bound on the label complexity of agnostic active learning. 
\newblock{In {\em Proc. 24th International Conference on Machine Learning}, pages 353--360, 2007.}

\bibitem{ha14}
S. Hanneke. Theory of disagreement-based active learning. 
\newblock{{\em Foundations and Trends in Machine Learning}, 7(2-3):131--309, 2014.}

\bibitem{dt84}
Dov Harel and Robert E. Tarjan. 
\newblock{Fast algorithms for finding nearest common ancestors.} 
\newblock{{\em SIAM Journal on Computing}, 13(2):338--355, 1984.}

\bibitem{kosaraju+99}
S. Kosaraju, T. Przytycka, and R. Borgstrom. 
\newblock{On an optimal split tree problem.} 
\newblock{In {\em Proc. 6th International Workshop on Algorithms and Data Structures}, pages 157--168, 1999.}

\bibitem{kub15}
S. Kpotufe, R. Urner, and S. Ben-David. 
\newblock{Hierarchical label queries with data-dependent partitions.}
\newblock{In {\em Proc. 28th Conference on Learning Theory}, pages 1176--1189, 2015.}

\bibitem{ms17b}
A. Mazumdar and B. Saha. 
\newblock{Clustering with noisy queries.} 
\newblock{In {\em arXiv:1706.07510v1}, 2017b.}

\bibitem{me11}
M. Meila. 
\newblock{Local equivalences of distances between clusterings?a geometric perspective.}
\newblock{{\em Machine Learning}, 86(3):369--389, 2012.}

\bibitem{ml18}
Stephen Mussmann and Percy Liang. 
\newblock{Generalized binary search for split-neighborly problems. }
\newblock{In {\em Proc. 21st International Conference on Artificial Intelligence and Statistics (AISTATS) 2018}, 2018.}

\bibitem{no11}
Robert D. Nowak. 
The geometry of generalized binary search. 
\newblock{{\em IEEE Transactions on Information Theory}, 57(12):7893--7906, 2011.}

\bibitem{ra71}
W. M. Rand. Objective criteria for the evaluation of clustering methods.
\newblock{{\em Journal of the American Statistical Association}, 66:846--850, 1971.}

\bibitem{td17}
C. Tosh and S. Dasgupta. 
\newblock{Diameter-based active learning.} 
\newblock{In {\em Thirty-fourth International Conference on Machine Learning (ICML)}, 2017.}

\end{thebibliography}
\end{document}